\theoremstyle{plain}
\newtheorem{theorem}{Theorem}[section]
\newtheorem{lemma}[theorem]{Lemma}
\theoremstyle{definition}
\theoremstyle{remark}
\newtheorem{remark}[theorem]{Remark}
\icmltitlerunning{Transolver: A Fast Transformer Solver for PDEs on General Geometries}
\begin{document}

\twocolumn[
\icmltitle{Transolver: A Fast Transformer Solver for PDEs on General Geometries}




\begin{icmlauthorlist}
\icmlauthor{Haixu Wu}{software}
\icmlauthor{Huakun Luo}{software}
\icmlauthor{Haowen Wang}{software}
\icmlauthor{Jianmin Wang}{software}
\icmlauthor{Mingsheng Long}{software}
\end{icmlauthorlist}

\icmlaffiliation{software}{School of Software, BNRist, Tsinghua University. \\Haixu Wu $<$wuhx23@mails.tsinghua.edu.cn$>$}

\icmlcorrespondingauthor{Mingsheng Long}{mingsheng@tsinghua.edu.cn}

\icmlkeywords{Scientific machine learning, PDE solver}

\vskip 0.3in
]



\printAffiliationsAndNotice{}  

\begin{abstract}
Transformers have {empowered many milestones} across various fields and have recently been applied to solve partial differential equations (PDEs). However, since PDEs are typically discretized into large-scale meshes with complex geometries, {it is challenging for Transformers to capture intricate physical correlations directly from massive individual points}.
Going beyond superficial and unwieldy meshes, we present Transolver based on a more foundational idea, which is learning intrinsic physical states {hidden} behind discretized geometries. {Specifically}, we propose {a new} Physics-Attention to adaptively split the discretized domain into a series of learnable slices {of} flexible shapes, where mesh points under similar physical states will be ascribed to the same slice. By calculating attention to physics-aware tokens encoded from slices, Transovler can effectively capture intricate physical correlations under complex geometrics, which also empowers {the solver} with endogenetic geometry-general modeling capacity and can be efficiently computed in linear complexity. Transolver achieves consistent state-of-the-art with 22\% relative gain across six standard benchmarks and also excels in large-scale industrial simulations, including car and airfoil designs. {Code is available at \href{https://github.com/thuml/Transolver}{https://github.com/thuml/Transolver}}.
\end{abstract}

\section{Introduction}
Solving partial differential equations (PDEs) is of immense importance in extensive real-world applications, such as weather forecasting, industrial design, and material analysis \citep{roubivcek2013nonlinear}. As a basic scientific problem, it is usually hard to obtain analytic solutions for PDEs. Thus, {PDEs} are typically discretized into meshes and then solved by numerical methods in practice, which usually takes a few hours or even days for complex structures \cite{umetani2018learning}. Recently, deep models have emerged as promising tools for solving PDEs \citep{lu2021learning,li2021fourier}. Benefiting from their impressive non-linear modeling capacity, they can learn to approximate the input and output mappings of PDE-governed tasks from data during training and then infer the solution significantly faster than numerical methods at the inference phase \citep{goswami2022deep,wu2023LSM}.

As {the major backbone} of foundation models, Transformers \citep{NIPS2017_3f5ee243} have achieved remarkable processes in extensive areas \citep{Devlin2019BERTPO,NEURIPS2020_1457c0d6,dosovitskiy2021an,liu2021Swin}, which have also been introduced in PDE solving \cite{li2023transformer}. However, as PDEs are usually discretized into large-scale meshes with complex geometrics for precise simulation, directly applying Transformers to massive mesh points faces difficulties in both computational efficiency and relation learning \cite{liu2021Swin,katharopoulos2020transformers}, impeding them from being ideal PDE solvers. For instance, to calculate the drag force of a driving car (Figure \ref{fig:intro}), the model needs to approximate the solution of Navier-Stokes equations, including estimating the pressure for surface meshes and velocity for surrounding volumes, which poses the following two challenges. First, this problem involves collaborative modeling of tens of thousands of irregularly placed mesh points, which is computationally prohibited for canonical attention due to its quadratic complexity. Second, PDEs involve extremely complex spatiotemporal interactions among multiple physics quantities. It is hard to capture these high-order and intricate correlations directly from massive individual points. Thus, \emph{how to efficiently capture physical correlations {underlying} the discretized domain} is the key to {``transform''} Transformers into practical PDE solvers.

\begin{figure*}[h]
\begin{center}
\centerline{\includegraphics[width=\textwidth]{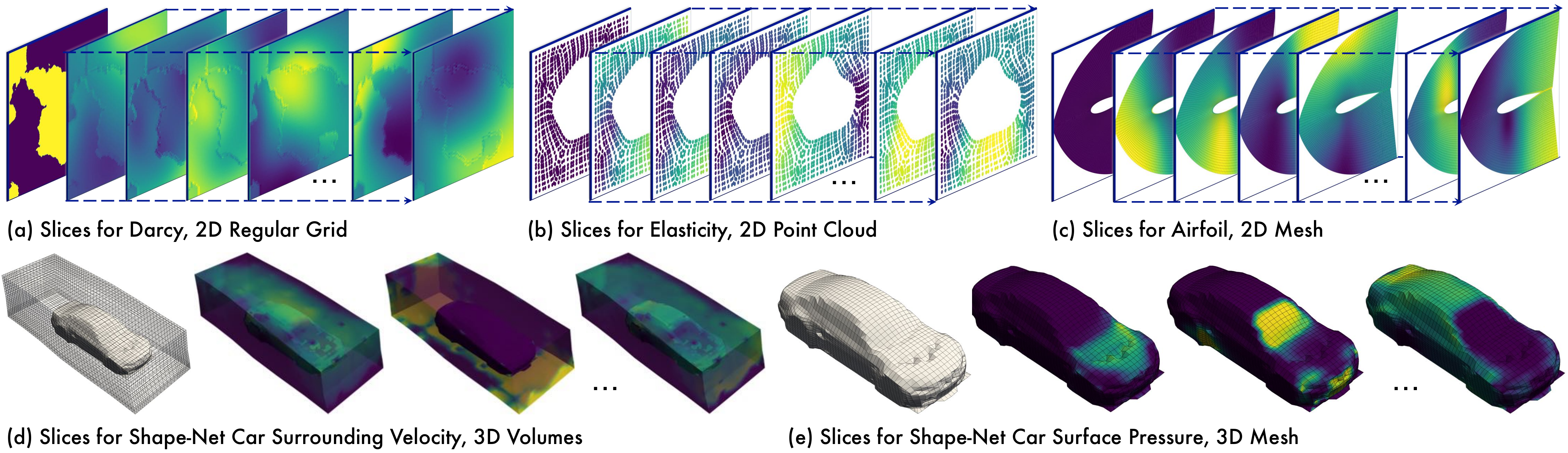}}
    \vspace{-5pt}
	\caption{Visualization of learned slices in Transolver. For each case, the leftmost subfigure is model input and the right shows learned slices. A brighter color indicates the mesh point is more ascribed to the corresponding slice. See Appendix \ref{appdix:full_vis} for more visualizations.}
	\label{fig:intro}
\end{center}
\vspace{-20pt}
\end{figure*}

Previous methods attempt to tackle the complexity problem by introducing linear attention \citep{hao2023gnot,anonymous2023factorized}, but directly applying the attention to massive mesh points may overwhelm the model from learning informative relations \cite{wu2022flowformer}. In addition, solely relying on features of individual points is also insufficient in capturing intricate physical correlations of PDEs \cite{trockman2022patches}, especially for industrial design, which usually involves extremely complex multiphysics interactions. Besides, although the patchify operation is widely adopted to augment the feature of a single pixel with local information in Vision Transformers \cite{dosovitskiy2021an,liu2021Swin}, the regular shape of patches is not applicable {to} unstructured geometries, let alone captures complicated physical states hidden under various discretized meshes.

Seeing through superficial and unwieldy meshes, this paper presents Transolver based on a more foundational idea, which is learning intrinsic physical states under complex geometrics. We present the Physics-Attention to decompose the discretized domain into a series of learnable slices, where mesh points under similar physical states will be ascribed to the same slice and then encoded into a physics-aware token. By applying attention to these learned physics-aware tokens, Physics-Attention can effectively capture complex underlying interactions behind the discretized domain. As shown in Figure~\ref{fig:intro}, the learned slices clearly reflect miscellaneous \emph{physical states} of PDEs, such as various fluid-structure interactions in a Darcy flow, different extrusion regions of elastic materials, shock wave and wake flow around the airfoil, front-back surfaces and up-bottom spaces of driving cars. This design can also natively adapt to intricate geometries and be effectively computed in linear time. We conduct extensive experiments on six well-established benchmarks with various geometries and large-scale industrial simulations, where Transolver achieves consistent state-of-the-art with impressive relative gain. Overall, our contributions are summarized as follows:
\vspace{-5pt}
\begin{itemize}
    \item Beyond prior methods, we propose to solve PDEs by learning intrinsic physical states behind the discretized domain, which frees our model from complex meshes and allows it to focus more on physical interactions.
    \item We present Transolver with Physics-Attention to decompose discretized domain into a series of learnable slices and apply attention to encoded physics-aware tokens, which can be computed in linear complexity.
    \item Transolver achieves consistent state-of-the-art with 22\% relative gain across six standard benchmarks and {excels in large-scale industrial simulations (e.g.~car and airfoil designs), presenting favorable efficiency, scalability and out-of-distribution generalizability.}
\end{itemize}


\section{Related Work}

\subsection{Neural PDE Solvers} As a long-standing {foundational} problem {in science and engineering}, solving PDEs has gained significant attention. In the past centuries, various classical numerical methods, such as finite element method and spectral methods, have been proposed and widely used in practical applications \cite{Wazwaz2002PartialDE,solin2005partial}. Recently, in view of the remarkable non-linear modeling capacity, deep models have also been introduced for solving PDEs as a fast surrogate \cite{karniadakis2021physics,wang2023scientific}, which can be roughly categorized into the following two paradigms. 

\vspace{-5pt}
\paragraph{Physics-informed neural networks} This paradigm formalizes PDE constraints, including equations, initial and boundary conditions as objective functions of deep models \cite{Weinan2017TheDR,Raissi2019PhysicsinformedNN,Wang2020UnderstandingAM,Wang2020WhenAW}. During training, the output of deep models will gradually satisfy PDE constraints, which can successfully approximate the PDE solution. However, this paradigm requires the exact formalization of PDEs, thus usually hard to apply to partially observed real-world applications.

\vspace{-5pt}
\paragraph{Neural operators} Another paradigm is to learn neural operators to approximate the input-output mappings in PDE-governed tasks, such as predicting the future fluid based on past observations or estimating the inner stress of solid materials \cite{lu2021learning,jmlr_operator}. The most well-established models are FNO \cite{li2021fourier} and its variants. \citet{li2021fourier} proposed Fourier neural operators by approximating integration with linear projection in the Fourier domain. Afterward, U-NO \cite{rahman2022u} and U-FNO \cite{Wen2021UFNOA} are presented by plugging the FNO with U-Net \cite{ronneberger2015u} for multiscale modeling. Besides, WMT \cite{Gupta2021MultiwaveletbasedOL} introduces multiscale wavelet bases to capture the complex correlations {at} various scales. F-FNO \cite{anonymous2023factorized} enhances model efficiency by employing factorization on the Fourier domain. LSM \cite{wu2023LSM} is recently proposed to tackle the high-dimension complexity of PDEs by applying spectral methods \cite{gottlieb1977numerical} in learned latent space.

To tackle irregular meshes, GNO \cite{Li2020NeuralOG} employs graph neural operators, and geo-FNO \cite{li2023geometry} utilizes the geometric Fourier transform to project the irregular input domain into uniform latent mesh. Recently, GINO \cite{li2023geometryinformed} combines GNO and geo-FNO for local and global simultaneous modeling. 3D-GeoCA \cite{anonymous2023geometryguided} enhances GNO by incorporating pre-trained 3D vision backbones \cite{xue2023ulip} {as better model initializations}. However, due to the periodic boundary assumption of Fourier bases \cite{gottlieb1977numerical}, geo-FNO will degenerate seriously in complex meshes, e.g.~a car shape. Graph kernels also fall short in learning global information.

Especially, Transformers \cite{NIPS2017_3f5ee243}, as a {vital cornerstone} of deep learning, have also been applied to solve PDEs. HT-Net \cite{anonymous2023htnet} integrates Swin Transformer \cite{liu2021Swin} and multigrid method \cite{wesseling1995introduction} to capture the multiscale spatial correlations. FactFormer \cite{li2023scalable} utilizes the low-rank structure to boost the model efficiency with multidimensional factorized attention. However, these methods assume that PDEs are discretized into a uniform grid, limiting their applications in unstructured meshes. Besides, to address the quadratic complexity of attention, OFormer \cite{li2023transformer}, GNOT \cite{hao2023gnot} and ONO \cite{anonymous2023improved} utilize the well-established linear Transformers, such as Reformer \cite{kitaev2020reformer}, Performer \cite{choromanski2021rethinking} and Galerkin Transformer \cite{Cao2021ChooseAT}. Still, all of these methods directly apply attention to massive mesh points. In contrast, Transolver applies attention to intrinsic physical states captured by learnable slices, thereby {better adept at} modeling intricate physical correlations.

\vspace{-4pt}
\subsection{Geometric Deep Learning}
A series of techniques have been developed to handle irregular geometrics, named geometric deep learning \cite{bronstein2017geometric}. Graph neural networks are representative ones, which employ kernels on connected graphs for representation learning \cite{hamilton2017inductive,gao2019graph,pfaff2021learning}. Besides, PointNet \cite{qi2017pointnet} and Point Transformer \cite{zhao2021point} are also presented for scatter point clouds. However, most of these methods are proposed for computer vision or graphics, which are different from the PDE-solving task in this paper. Besides, all of these methods are well-designed for complex geometrics, while Transolver, {benefiting} from learning {physical-sensitive} slices, is {apt at capturing physics information underlying} unwieldy meshes.

\section{Method}
To tackle difficulties in efficiency and correlation modeling, we present Transolver with Physics-Attention to learn {high-level} correlations among intrinsic physical states under discretized meshes in PDE-governed tasks. Different from learning {low-level} relations over mesh points, focusing on physical states will free our model from complex geometrics, benefiting physics {solving} and computation efficiency. 

\vspace{-5pt}
\paragraph{Problem setup} Consider PDEs defined on {input} domain $\Omega\subset\mathbb{R}^{C_\mathbf{g}}$, where ${C_\mathbf{g}}$ denotes the dimension of input space. For numerical calculation, $\Omega$ is firstly discretized into a finite set of $N$ mesh points $\mathbf{g}\in\mathbb{R}^{N\times {C_\mathbf{g}}}$. The task is to estimate target physical quantities based on input geometrics $\mathbf{g}$ and quantities $\mathbf{u}\in\mathbb{R}^{N\times {C_\mathbf{u}}}$ observed on $\mathbf{g}$.
Here $\mathbf{u}$ is optional in some PDE-governed tasks. For instance, for fluid prediction, the input includes both the observation grid and observed past fluid velocity, where the target is the future velocity on each grid point. As for car or airfoil designs, the input only contains the discretized mesh structure and the model needs to estimate the surface and surrounding physics quantities.

\subsection{Learning Physics-Aware Tokens}

As we discussed before, the key to solving PDEs is to capture intricate physical correlations. However, the numerous discretized mesh points may overwhelm the attention mechanism from learning reliable correlations. Seeing through superficial meshes, we find that these mesh points are a finite {discrete} sampling of the underlying {continuous} physics space, which inspires us to learn {the} intrinsic physical states. As shown in Figure~\ref{fig:slice}, which is to estimate the surface pressure of a driving car, we notice that the surface mesh set can be ascribed to several physically internal-consistent subsets, such as front, bevel and back areas. This {discovery} provides a more foundational view for solving PDE-governed tasks.

\begin{figure}[h]
\begin{center}
\centerline{\includegraphics[width=\columnwidth]{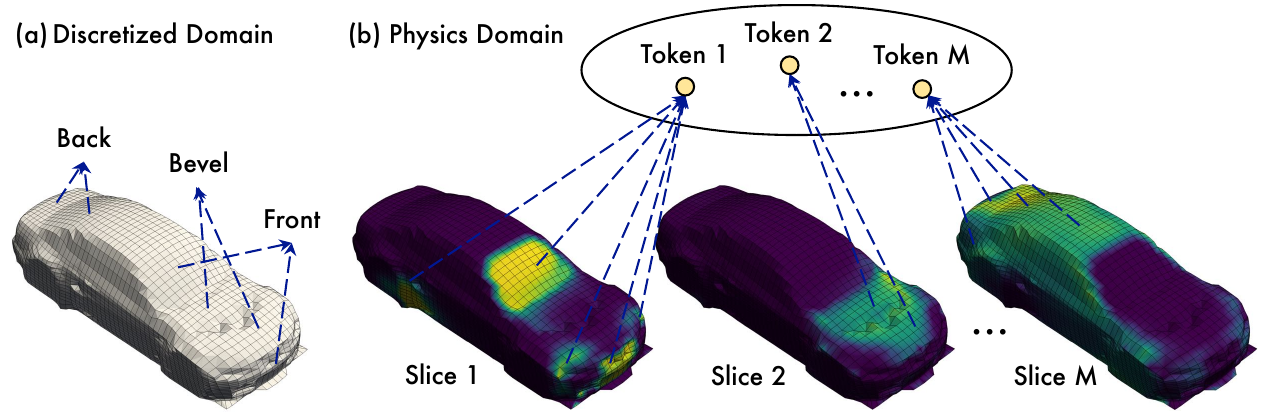}}
    \vspace{-5pt}
	\caption{Learning physics-aware tokens from Transolver slices.}
	\label{fig:slice}
\end{center}
\vspace{-20pt}
\end{figure}

\begin{figure*}[t]
\begin{center}
\centerline{\includegraphics[width=\textwidth]{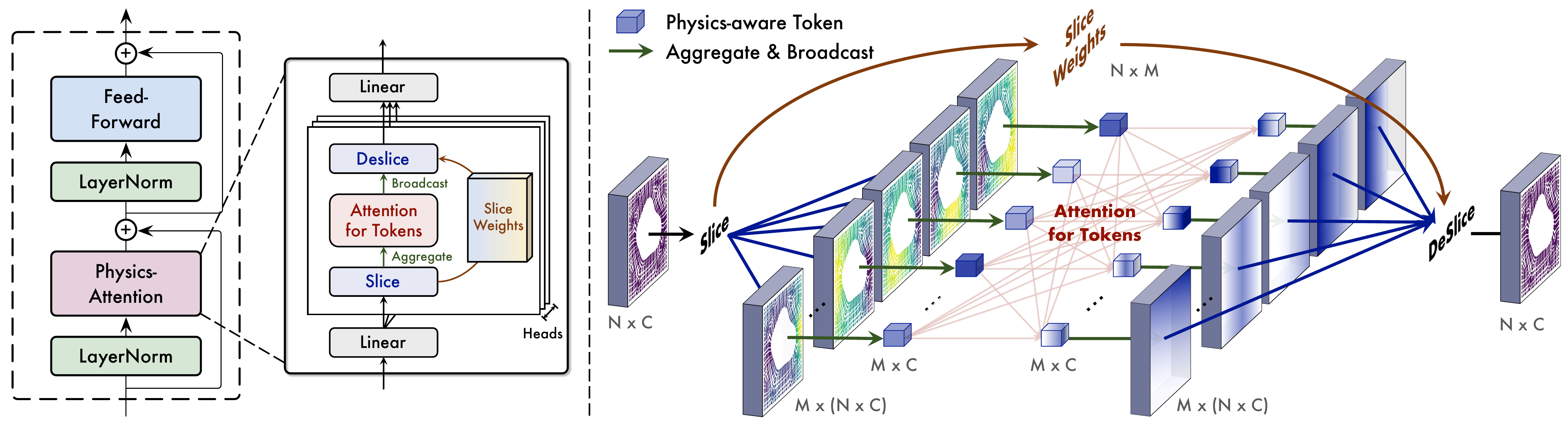}}
    \vspace{-5pt}
	\caption{Overall design of Transolver layer, which replaces the standard attention with Physics-Attention. Each head encodes the input domain into a series of physics-aware tokens and then captures physical correlations under intricate geometrics by attention among tokens.}
	\label{fig:slice_attn}
\end{center}
\vspace{-20pt}
\end{figure*}

Technically, given a mesh set $\mathbf{g}=\{\mathbf{g}_{i}\}_{i=1}^{N}$ with the coordinate information of $N$ mesh points and observed quantities $\mathbf{u}$, we firstly embed them into deep features $\mathbf{x}=\{\mathbf{x}_{i}\}_{i=1}^{N}$ by a linear layer, where each mesh point feature contains $C$ channels, i.e.~$\mathbf{x}_{i}\in\mathbb{R}^{1\times C}$, and involves both geometry and physics information. To capture physical states under the whole input domain, we propose a bottom-up paradigm, that ascribes each mesh point $\mathbf{g}_{i}$ to $M$ potential slices based on its learned feature $\mathbf{x}_{i}$, which is formalized as follows:
\begin{equation}
	\begin{split}\label{equ:learn_slice}
\{\mathbf{w}_{i}\}_{i=1}^{N} &= \left\{\operatorname{Softmax}\big(\operatorname{Project}\left(\mathbf{x}_{i}\right)\big)\right\}_{i=1}^{N} \\
\mathbf{s}_{j} &= \left\{\mathbf{w}_{i,j}\mathbf{x}_{i}\right\}_{i=1}^{N},
	\end{split}
\end{equation}
where $\operatorname{Project}()$ projects $C$ channels into $M$ weights and yields slice weights $\mathbf{w}_{i}\in\mathbb{R}^{1\times M}$ after $\operatorname{Softmax}()$. Specifically, $\mathbf{w}_{i,j}$ represents the degree that the ${i}$-th mesh point belongs to the $j$-th slices with $\sum_{j=1}^{M} \mathbf{w}_{i,j}=1$. $\mathbf{s}_{j}\in\mathbb{R}^{N\times C}$ represents the $j$-th slice feature, which is a {weighted combination of $N$ mesh point features $\mathbf{x}$}. Note that mesh points with close features will derive similar slice weights, which means they are more likely to be assigned to the same slice. To avoid a uniform assignment of each mesh point, we adopt $\operatorname{Softmax}()$ along the slice dimension {(i.e.~newly projected $M$ dimension)} to make learned slice weights {low-entropy} and ensure informative physical states. In practice, $\operatorname{Project}()$ is configured as a point-wise linear layer, which can naturally adapt to general geometries. As for structured meshes or uniform grid, it can also be instantiated as a local convolution, mesh-free layer for better representations.

Afterward, since each slice contains mesh points with similar geometry and physics features, we further encode them into physical-aware tokens by spatially weighted aggregation, which can be written as follows:
\begin{equation}
	\begin{split}\label{equ:token_encoding}
\mathbf{z}_{j} = \frac{\sum_{i=1}^{N} \mathbf{s}_{j,i}}{\sum_{i=1}^{N}\mathbf{w}_{i,j}} = \frac{\sum_{i=1}^{N} \mathbf{w}_{i,j}\mathbf{x}_{i}}{\sum_{i=1}^{N} \mathbf{w}_{i,j}} ,
	\end{split}
\end{equation}
where $\mathbf{z}_{j}\in\mathbb{R}^{1\times C}$. We normalize {each} token {feature $\mathbf{z}_j$} by dividing the sum of slice weights.
After encoding from physically internal-consistent slices by spatial aggregation, each token contains information {of} a specific physical state.

\vspace{2pt}
\begin{remark}[\textbf{Why slices can learn physically internal-consistent information}] Firstly, as we aforementioned, slice weights are projected from mesh features. Thus, mesh points with similar features will be more likely to be assigned to the same slice. Secondly, since we will apply attention to the tokens encoded from slices, to decrease the final loss, the slice weights will be further optimized to assign mesh points under similar physical states to the same slice during training. Otherwise, the attention among tokens could be confused by the less distinguishable and {state-hybrid} token features, resulting in a less satisfying performance.
\end{remark}
\vspace{2pt}
\begin{remark}[\textbf{Learning slice is different from splitting computation area}] Classical numerical methods, such as finite element method, usually split the whole mesh into several computation areas for better simulation. This process requires huge specialized knowledge and manual effort \cite{solin2005partial} and can only cover spatially local areas. It is insufficient to capture points under similar physical states but spatially distant, e.g.~windshield and license plate of driving cars. In this paper, we take benefits from deep features and learn physical states in a bottom-up paradigm. The learned slices are beyond local areas. As shown in Figure~\ref{fig:intro}(e), the model learns to ascribe the windshield, license plate and headlight of the car into the same slice because they are all in the front area during driving, which is highly related to the drag force, verifying the effectiveness of learning slices.
\end{remark}





\subsection{Transolver}
Based on the idea of learning physics-aware tokens, we propose the Transolver by renovating Transformer with Physics-Attention to capture intricate physical correlations of PDEs.

\vspace{-5pt}
\paragraph{Physics-Attention} As described in the last section, for a deep feature $\mathbf{x}\in\mathbb{R}^{N\times C}$ embedded from input, we firstly decompose it into $M$ physically internal-consistent slices $\mathbf{s}=\{\mathbf{s}_{j}\}_{j=1}^{M}\in\mathbb{R}^{M\times (N\times C)}$ based on learned slice weights $\mathbf{w}\in\mathbb{R}^{N\times M}$. Then, to obtain the specific physics information contained in each slice, we aggregate $M$ slices to $M$ physics-aware tokens $\mathbf{z}=\{\mathbf{z}_{j}\}_{j=1}^{M}\in\mathbb{R}^{M\times C}$ by Eq.~\eqref{equ:token_encoding}.

Next, as shown in Figure \ref{fig:slice_attn}, we employ the attention mechanism among encoded tokens to capture intricate correlations among different physical states, that is
\begin{equation}
\begin{split}\label{equ:attn}
\mathbf{q}, \mathbf{k}, \mathbf{v} = \operatorname{Linear}(\mathbf{z}), \ \ \mathbf{z}^\prime = \operatorname{Softmax}\left(\frac{\mathbf{q}\mathbf{k}^{\sf T}}{\sqrt{C}}\right)\mathbf{v},
	\end{split}
\end{equation}
where $\mathbf{q}, \mathbf{k}, \mathbf{v}, \mathbf{z}^\prime\in\mathbb{R}^{M\times C}$.
Afterward, transited physical tokens $\mathbf{z}^\prime=\{\mathbf{z}_{j}^\prime\}_{j=1}^{M}$ are transformed back to mesh points by deslicing, which recomposes tokens with slice weights:
\begin{equation}
	\begin{split}\label{equ:deslice}
\mathbf{x}_{i}^\prime & = \sum_{j=1}^{M} \mathbf{w}_{i,j}\mathbf{z}_{j}^\prime,
	\end{split}
\end{equation}
where $1\leq i\leq N$ and each token $\mathbf{z}_{j}^\prime$ is broadcasted to all mesh points during above calculation.
For clarity, we summarize the above process as $\mathbf{x}^\prime = \operatorname{Physics-Attn}(\mathbf{x})$, whose overall complexity is $\mathcal{O}(NMC + M^2 C)$. Since we set $M$ as a constant and $M\ll N$, the computation complexity is linear w.r.t.~the number of mesh points. Following the convention of attention mechanism \cite{NIPS2017_3f5ee243}, we adopt the multi-head version for Physics-Attention to augment the model capacity, which splits the input feature into several subspaces along the channel dimension.

\vspace{2pt}
\begin{remark}[\textbf{Attention as learnable integral operator}] Prior methods define the PDE-solving task as an iterative updated process \cite{Li2020NeuralOG} and prove that canonical attention is a Monte-Carlo approximation of the integral operator on the input domain $\Omega$ \cite{Cao2021ChooseAT,jmlr_operator}, which can be used to approximate the solving process of each iteration step. However, in our work, the attention is applied to tokens encoded from slices. Toward a better theoretical understanding of Physics-Attention, we will prove that our design is also equivalent to learnable integral on $\Omega$.
\end{remark}

\begin{theorem}[\textbf{Physics-Attention is equivalent to learnable integral on $\Omega$}]
\label{theorem:understanding}
Given input function $\boldsymbol{u}:\Omega\to\mathbb{R}^{C}$ and a mesh point $\mathbf{g}^\ast\in\Omega$, Physics-Attention is to approximate the integral operator $\mathcal{G}$, which is defined as:
\begin{equation}
	\begin{split}\label{equ:integral}
\mathcal{G}(\boldsymbol{u})(\mathbf{g}^\ast) & = \int_{\Omega}\kappa(\mathbf{g}^\ast,\boldsymbol{\xi})\boldsymbol{u}(\boldsymbol{\xi})\mathrm{d}\boldsymbol{\xi},
	\end{split}
\end{equation}
where $\kappa(\cdot,\cdot)$ denotes the kernel function defined on $\Omega\times \Omega$.
\end{theorem}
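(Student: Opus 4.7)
The strategy is to unroll the four stages of Physics-Attention (slice assignment, token aggregation, attention, deslicing), collapse the resulting composition into a single weighted sum over input mesh points, and then identify this sum as a Riemann/Monte-Carlo approximation of an integral against a learnable kernel $\kappa$. The mesh point $\mathbf{g}^\ast$ will play the role of the ``query'' location, while $\boldsymbol{u}(\boldsymbol{\xi})$ corresponds to the embedded input feature at a generic integration variable $\boldsymbol{\xi}\in\Omega$.

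\textbf{Step 1 (Extend the discrete slice weights to a continuous field).} Since $\operatorname{Project}$ is a point-wise linear map of $\mathbf{x}$, which in turn is an embedding of $(\mathbf{g},\boldsymbol{u}(\mathbf{g}))$, the slice weight $\mathbf{w}_{i,j}$ is the evaluation at $\mathbf{g}_i$ of a continuous function $w_j(\boldsymbol{\xi})=\operatorname{Softmax}_j(\operatorname{Project}(\boldsymbol{\xi},\boldsymbol{u}(\boldsymbol{\xi})))$. I will use this $w_j:\Omega\to[0,1]$ throughout.

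\textbf{Step 2 (Rewrite tokens as ratios of integrals).} Chain Eq.~\eqref{equ:token_encoding} with the sampling assumption $\{\mathbf{g}_i\}_{i=1}^N\subset\Omega$: for any bounded $f$,
\begin{equation*}
\tfrac{1}{N}\sum_{i=1}^{N} f(\mathbf{g}_i) \;\xrightarrow{N\to\infty}\; \tfrac{1}{|\Omega|}\int_{\Omega} f(\boldsymbol{\xi})\,\mathrm{d}\boldsymbol{\xi}.
\end{equation*}
Applied to numerator and denominator of $\mathbf{z}_j$, the $1/N$ factors cancel and we obtain the continuous token
\begin{equation*}
\mathbf{z}_j \;=\; \frac{\int_{\Omega} w_j(\boldsymbol{\xi})\,\boldsymbol{u}(\boldsymbol{\xi})\,\mathrm{d}\boldsymbol{\xi}}{\int_{\Omega} w_j(\boldsymbol{\eta})\,\mathrm{d}\boldsymbol{\eta}}.
\end{equation*}
Here I am abusing notation to let $\boldsymbol{u}$ denote the embedded feature field (the preceding linear embedding is absorbed into the integral operator's learnable structure).

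\textbf{Step 3 (Collapse attention and deslicing into a double sum, then a single kernel).} Write the softmax attention as $\mathbf{z}'_j=\sum_{k=1}^{M} a_{j,k}\,W_v\mathbf{z}_k$ with $a_{j,k}=\operatorname{Softmax}_k((\mathbf{q}_j\mathbf{k}_k^{\sf T})/\sqrt{C})$. Substituting Step~2 and then the deslicing formula \eqref{equ:deslice} at the query point $\mathbf{g}^\ast$, with $w_j(\mathbf{g}^\ast)$ replacing $\mathbf{w}_{i^\ast,j}$, yields
\begin{equation*}
\mathbf{x}'(\mathbf{g}^\ast) \;=\; \int_{\Omega}\Bigl[\sum_{j,k=1}^{M} w_j(\mathbf{g}^\ast)\,a_{j,k}\,\frac{w_k(\boldsymbol{\xi})}{\int_{\Omega}w_k(\boldsymbol{\eta})\,\mathrm{d}\boldsymbol{\eta}}\Bigr]\,W_v\,\boldsymbol{u}(\boldsymbol{\xi})\,\mathrm{d}\boldsymbol{\xi}.
\end{equation*}
Absorbing $W_v$ into the kernel, define
\begin{equation*}
\kappa(\mathbf{g}^\ast,\boldsymbol{\xi}) \;=\; \sum_{j,k=1}^{M} w_j(\mathbf{g}^\ast)\,a_{j,k}\,\frac{w_k(\boldsymbol{\xi})}{\int_{\Omega}w_k(\boldsymbol{\eta})\,\mathrm{d}\boldsymbol{\eta}},
\end{equation*}
which is a continuous function on $\Omega\times\Omega$ since each $w_j$ and each $a_{j,k}$ depends continuously on the learnable parameters. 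Matching this with the right-hand side of \eqref{equ:integral} delivers the claim.

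\textbf{Main obstacle.} The nontrivial part is not the algebraic unrolling but the conceptual identification in Step~2: justifying that the \emph{ratio} $\mathbf{z}_j$ in \eqref{equ:token_encoding}, which encodes two separate Monte-Carlo sums, legitimately passes to a ratio of integrals so that the $N$-dependent sampling cancels. Once this is in place, the remaining work is bookkeeping: rolling the constant linear maps ($W_v$, the input embedding) and the attention coefficients $a_{j,k}$ into a single learnable kernel $\kappa$, and verifying that $\kappa$ inherits the bilinear ``query-key'' structure responsible for its expressivity. I do not expect technical difficulty beyond this; regularity of $\kappa$ follows from the smoothness of softmax and the point-wise linearity of $\operatorname{Project}$.
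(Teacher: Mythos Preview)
Your argument is correct and in fact cleaner than the route the paper takes. The paper does not stay in $\Omega$: it constructs an auxiliary \emph{slice domain} $\Omega_{\mathrm{s}}$, builds a bijection (then a diffeomorphism) $\boldsymbol{g}:\Omega\to\Omega_{\mathrm{s}}$, defines a token function $\boldsymbol{u}_{\mathrm{s}}$ on $\Omega_{\mathrm{s}}$ via the same ratio-of-integrals you wrote in Step~2, and then rewrites $\int_{\Omega}\kappa\,\boldsymbol{u}$ via change of variables as an integral over $\Omega_{\mathrm{s}}$ with Jacobian $|\det(\nabla_{\boldsymbol{\xi}_{\mathrm{s}}}\boldsymbol{g}^{-1})|$; the attention step (Eq.~\eqref{equ:attn}) is then recovered by a Monte-Carlo approximation \emph{in the slice domain} rather than in $\Omega$. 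Conceptually this casts Physics-Attention as ordinary attention after a learned reparametrization of the integration domain, but it comes at the price of several side assumptions (countability of $\Omega$, smoothness of the slice-weight field to upgrade the bijection to a diffeomorphism, setting the Jacobian to $1$ ``by permutation invariance'', and imposing $\int_{\Omega_{\mathrm{s}}} w_{\mathbf{g},\boldsymbol{\xi}'_{\mathrm{s}}}\,\mathrm{d}\boldsymbol{\xi}'_{\mathrm{s}}=1$). Your direct unrolling sidesteps all of this machinery and exposes the kernel explicitly as a finite-rank separable object $\kappa(\mathbf{g}^\ast,\boldsymbol{\xi})=\sum_{j,k} w_j(\mathbf{g}^\ast)\,a_{j,k}\,w_k(\boldsymbol{\xi})/\!\int w_k$, which is a strictly more informative conclusion. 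The only point worth flagging in your write-up is that the coefficients $a_{j,k}$ depend on the tokens $\mathbf{z}_j$ and hence on $\boldsymbol{u}$ globally, so $\kappa$ is data-dependent in the same sense as the canonical-attention kernel; you allude to this in the last sentence but it deserves an explicit remark, since otherwise a reader might object that $\kappa$ is not a function of $(\mathbf{g}^\ast,\boldsymbol{\xi})$ alone.
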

\begin{proof} By constructing a diffeomorphism projection between mesh domain $\Omega$ and slice domain $\Omega_{\mathrm{s}}$, and substituting integration variable from $\boldsymbol{\xi}\in\Omega$ to $\boldsymbol{\xi}_{\mathrm{s}}\in\Omega_{\mathrm{s}}$, we can rewrite Eq.~\eqref{equ:integral} as an integral on $\Omega_{\mathrm{s}}$, which is approximated by attention in Eq.~\eqref{equ:attn}. 
See Appendix \ref{appendix:proof} for complete proof.
\end{proof}

\paragraph{Overall design} Following the architecture of canonical Transformer \cite{NIPS2017_3f5ee243}, we propose the Transolver by replacing the attention mechanism with Physics-Attention. Suppose there are $L$ layers, as shown in Figure \ref{fig:slice_attn}, the $l$-th layer of Transolver can be formalized as follows:
\begin{equation}
	\begin{split}\label{equ:overall}
\hat{\mathbf{x}}^{l} &= \operatorname{Physics-Attn}\left(\operatorname{LayerNorm}\left({\mathbf{x}}^{l-1}\right)\right) + {\mathbf{x}}^{l-1}\\
{\mathbf{x}}^{l} &= \operatorname{FeedForward}\left(\operatorname{LayerNorm}\left(\hat{\mathbf{x}}^{l}\right)\right) + \hat{\mathbf{x}}^{l},
	\end{split}
\end{equation}
where $l\in\{1,\cdots, L\}$. ${\mathbf{x}}^{l}\in\mathbb{R}^{N\times C}$ is the output of the $l$-th layer. $\mathbf{x}^{0}\in\mathbb{R}^{N\times C}$ represents the input deep feature, which is embedded from input geometries $\mathbf{g}\in\mathbb{R}^{N\times C_{\mathbf{g}}}$ and initial observation $\mathbf{u}\in\mathbb{R}^{N\times C_{\mathbf{u}}}$ by a linear embedding layer, i.e.~$\mathbf{x}^{0}=\operatorname{Linear}\left(\operatorname{Concat}(\mathbf{g}, \mathbf{u})\right)$. Here $C_{\mathbf{g}}$ is the dimension of geometry space and $C_{\mathbf{u}}$ is the number of observed physical quantities. At last, we adopt a linear projection {upon} $\mathbf{x}^{L}$ and obtain the final output {as predictions of $\mathbf{u}$}.


\section{Experiments}
We conduct extensive experiments to evaluate Transolver, including six well-established benchmarks and two industrial-level design tasks, covering various geometrics.

\vspace{-5pt}
\paragraph{Benchmarks} As presented in Table \ref{tab:dataset}, our experiments span point cloud, structured mesh, regular grid and unstructured mesh in both 2D and 3D space. Elasticity, Plasticity, Airfoil, Pipe, Navier-Stokes and Darcy {were} proposed by FNO \cite{li2021fourier} and geo-FNO \cite{Li2022FourierNO}, which have been widely followed. Besides, we also experiment with car and airfoil design tasks. Shape-Net Car \cite{umetani2018learning} is to estimate the surface pressure and surrounding air velocity given vehicle shapes. AirfRANS \cite{bonnet2022airfrans} contains high-fidelity simulation data for Reynolds-Averaged Navier–Stokes equations on airfoils from the National Advisory Committee for Aeronautics.

\begin{table}[h]
\vspace{-10pt}
	\caption{Summary of experiment benchmarks, which includes various geometrics. \#Mesh records the size of discretized meshes.}
	\label{tab:dataset}
	\vskip 0.1in
	\centering
	\begin{small}
		\begin{sc}
			\renewcommand{\multirowsetup}{\centering}
			\setlength{\tabcolsep}{4.7pt}
			\scalebox{1}{
			\begin{tabular}{l|c|c|c}
				\toprule
			    Geometry & Benchmarks & \#Dim & \#Mesh \\
			    \midrule
                 Point Cloud & Elasticity & 2D & 972 \\
                 \midrule
			     Structured  & Plasticity & 2D+Time & 3,131 \\
        	Mesh & Airfoil & 2D & 11,271 \\
                  & Pipe & 2D & 16,641 \\
                  \midrule
                  \multirow{2}{*}{Regular Grid} 
		          & Navier–Stokes & 2D+Time & 4,096 \\	     
                    & Darcy & 2D & 7,225 \\
                \midrule
                Unstructured & Shape-Net Car & 3D & 32,186 \\
                Mesh & AirfRANS & 2D & 32,000 \\
				\bottomrule
			\end{tabular}}
		\end{sc}
	\end{small}
	\vspace{-5pt}
\end{table}

\vspace{-5pt}
\paragraph{Baselines} We comprehensively compare Transolver with more than 20 baselines, including typical neural operators: FNO \citeyearpar{li2021fourier}, U-NO \citeyearpar{rahman2022u}, LSM \citeyearpar{wu2023LSM}, etc, Transformer PDE solvers: GNOT \citeyearpar{hao2023gnot}, FactFormer \citeyearpar{li2023scalable}, etc, and classical geometric deep models: PointNet \citeyearpar{qi2017pointnet}, GraphSAGE \citeyearpar{hamilton2017inductive}, MeshGraphNet \citeyearpar{pfaff2021learning}, etc. LSM \cite{wu2023LSM} and GNOT \cite{hao2023gnot} are previous state-of-the-art {on} standard benchmarks. GINO \cite{li2023geometryinformed} and 3D-GeoCA \cite{anonymous2023geometryguided} are advanced deep models {for} large-scale industrial-level simulation benchmarks.

\begin{table*}[t]
	\caption{Performance comparison on standard benchmarks. Relative L2 is recorded. A smaller value indicates better performance. For clarity, the best result is in bold and the second best is underlined. Promotion refers to the relative error reduction w.r.t.~the second best model ($1-\frac{\text{Our error}}{\text{The second best error}}$) on each benchmark. ``/'' means that the baseline cannot apply to this benchmark.}
	\label{tab:mainres_standard}
	\vspace{-5pt}
	\vskip 0.15in
	\centering
	\begin{small}
		\begin{sc}
			\renewcommand{\multirowsetup}{\centering}
			\setlength{\tabcolsep}{7.2pt}
			\begin{tabular}{l|cccccc}
				\toprule
                    \multirow{3}{*}{Model} & \multicolumn{1}{c}{Point Cloud} & \multicolumn{3}{c}{Structured Mesh} & \multicolumn{2}{c}{Regular Grid} \\
                    \cmidrule(lr){2-2}\cmidrule(lr){3-5}\cmidrule(lr){6-7}
				& Elasticity & Plasticity & Airfoil & Pipe & Navier–Stokes & Darcy \\
				\midrule
                    FNO \citep{li2021fourier} & / & / & / & / & 0.1556 & 0.0108 \\
                    WMT \citep{Gupta2021MultiwaveletbasedOL} & 0.0359 & 0.0076 & 0.0075 & 0.0077 & {0.1541} & 0.0082 \\
                    U-FNO \citep{Wen2021UFNOA} &0.0239 & 0.0039 & 0.0269 & {0.0056} & 0.2231 & 0.0183 \\
                    geo-FNO \citep{Li2022FourierNO} & {0.0229} & 0.0074 & 0.0138 & 0.0067 & 0.1556 & 0.0108 \\
                    U-NO \citep{rahman2022u} & 0.0258 & {0.0034} & 0.0078 & 0.0100 & 0.1713 & 0.0113 \\
                    F-FNO \citep{anonymous2023factorized} &0.0263 & 0.0047 & 0.0078 & 0.0070 & 0.2322 & {0.0077} \\
                    LSM \citep{wu2023LSM} & 0.0218 & 0.0025 & \underline{0.0059} & 0.0050 & 0.1535 & \underline{0.0065} \\
                    \midrule
                    Galerkin \citep{Cao2021ChooseAT} & 0.0240 & 0.0120 & 0.0118 & 0.0098 & 0.1401 & 0.0084 \\
                    HT-Net \citep{anonymous2023htnet} & / & 0.0333 & 0.0065 & 0.0059 & 0.1847 & 0.0079 \\
                    OFormer \citep{li2023transformer} & 0.0183 & \underline{0.0017} & 0.0183 & 0.0168 & 0.1705 & 0.0124 \\
                    GNOT \citep{hao2023gnot} & \underline{0.0086} & 0.0336 & 0.0076 & \underline{0.0047} & 0.1380 & 0.0105 \\
                    FactFormer \citep{li2023scalable} & / & 0.0312 & 0.0071 & 0.0060 & 0.1214 & 0.0109 \\
                    ONO \citep{anonymous2023improved} & 0.0118 & 0.0048 & 0.0061 & 0.0052 & \underline{0.1195} & 0.0076 \\
                    \midrule
                    \textbf{Transolver (Ours)} & \textbf{0.0064} & \textbf{0.0012} & \textbf{0.0053} & \textbf{0.0033} & \textbf{0.0900} & \textbf{0.0057} \\
                    Relative Promotion & 25.6\% & 29.4\% & 10.2\% & 29.7\% & 24.7\% & 12.3\% \\
				\bottomrule
			\end{tabular}
		\end{sc}
	\end{small}
    \vspace{-5pt}
\end{table*}

\vspace{-5pt}
\paragraph{Implementations} For fairness, we set the number of layers $L$ as 8 and the channel of hidden features $C$ as 128 or 256 according to the number of observed quantities of input data, which ensures that our model parameter is comparable to other Transformer-based models, such as GNOT \citeyearpar{hao2023gnot} or ONO \citeyearpar{anonymous2023improved}. The number of slices $M$ is chosen from $\{32,64\}$ according to the hidden dimension to balance model efficiency. All the experiments are conducted on one NVIDIA A100 GPU and repeated three times. In addition to the relative L2 of estimated physics fields, we also calculate the error and Spearman’s rank correlation of drag and lift coefficients for practical design tasks. See Appendix \ref{appendix:detail} for comprehensive descriptions of implementations.

\vspace{3pt}
\subsection{Main Results}
\vspace{2pt}
\paragraph{Standard Benchmarks} To clearly benchmark our model among multifarious neural operators, we first experiment on six well-established datasets, which can conveniently build a complete leaderboard from previous papers \cite{Li2022FourierNO,wu2023LSM,hao2023gnot}.

As presented in Table \ref{tab:mainres_standard}, Transolver achieves consistent state-of-the-art across six widely-used benchmarks, covering solid and fluid physics in various geometrics. Notably, Transolver gains significant promotion in tasks on point cloud and structured mesh (25.6\% in Elasticity, 29.4\% in Plasticity, 29.7\% in Pipe), demonstrating the effectiveness of our design in handling complex geometrics. Also, some advanced Transformer-based models, such as OFormer and GNOT, directly apply linear attention to mesh points, where we can find that they fall short in handling the Darcy benchmark. This is because Darcy requires the model to simulate the fluid pressure through the porous medium, which involves complex \emph{fluid-structure interaction} \cite{bungartz2006fluid} along the twisty medium boundary, while directly applying attention to massive mesh points will degenerate in correlation modeling \cite{wu2022flowformer}. {Benefiting from learning physical states, Transolver can obtain more informative geometry tokens than single mesh points and significantly reduce the token number, thereby capturing complex fluid-structure interactions better.}

\vspace{-5pt}
\paragraph{Practical Design} Wind tunnel testing is one of the most essential steps in industrial design. As described in Figure~\ref{fig:pratical_design}, to examine the model effectiveness in complex real-world applications, we also experiment with the simulated wind tunnel scenario. Concretely, we simulate a driving car and record the surface pressure and surrounding air velocity, which can be used to calculate drag force. The task is to estimate the surface and surrounding physics fields based on the car's surface geometry. As for the AirfRANS, in addition to different airfoil shapes, our dataset also includes different angles of attack with different Reynolds numbers, which can better cover real flying cases. Note that these two tasks are quite challenging since they require the model to handle multiphysics simulation for {hybrid} geometrics.

\begin{figure}[t]
\begin{center}
\centerline{\includegraphics[width=0.85\columnwidth]{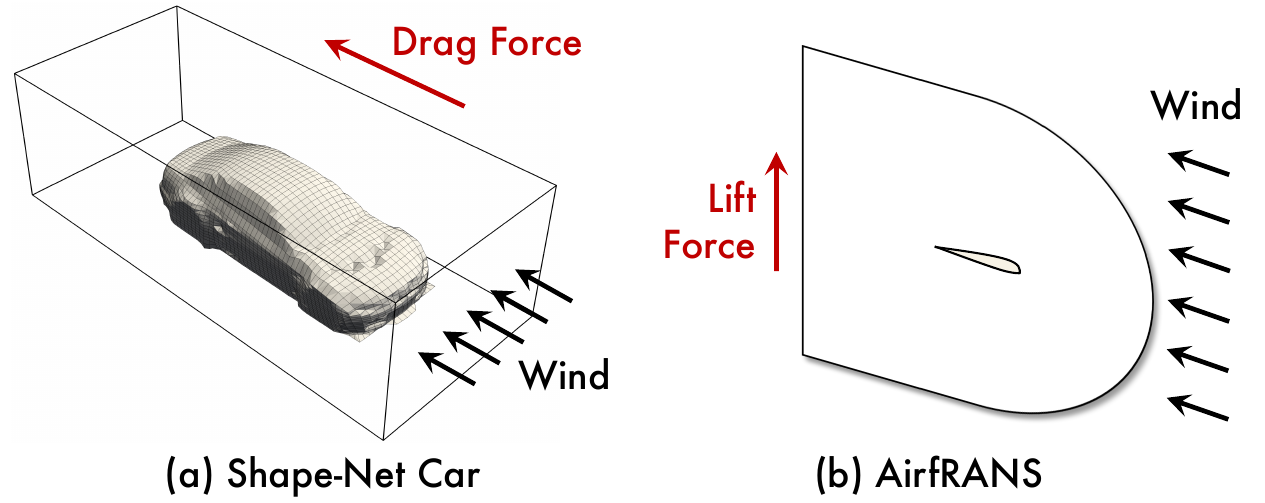}}
    \vspace{-5pt}
	\caption{Car and airfoil design tasks. The key problem is to estimate the drag and lift force of a driving car or a flying airplane.}
	\label{fig:pratical_design}
\end{center}
\vspace{-30pt}
\end{figure}

\begin{table*}[t]
	\caption{Performance comparison on practical design tasks. Both benchmarks are {with} unstructured mesh. In addition to the relative L2 of the surrounding (Volume) and surface (Surf) physics fields, the relative L2 of drag coefficient ($C_{D}$) and lift coefficient ($C_{L}$) is also recorded, along with their Spearman’s rank correlations $\rho_{D}$ and $\rho_{L}$. A Spearman's correlation close to 1 indicates better performance.}
	\label{tab:mainres_design}
	\vspace{-5pt}
	\vskip 0.15in
	\centering
	\begin{small}
		\begin{sc}
			\renewcommand{\multirowsetup}{\centering}
			\setlength{\tabcolsep}{4.8pt}
			\begin{tabular}{l|cccccccccc}
				\toprule
                    \multirow{3}{*}{Model$^\ast$} & \multicolumn{4}{c}{Shape-Net Car} & \multicolumn{4}{c}{AirfRANS} \\
                    \cmidrule(lr){2-5}\cmidrule(lr){6-9}
				& Volume $\downarrow$  & Surf $\downarrow$ & $C_{D}$ $\downarrow$ & $\rho_{D}$ $\uparrow$ & Volume $\downarrow$ & Surf $\downarrow$ & $C_{L}$ $\downarrow$ & $\rho_{L}$ $\uparrow$  \\
                    \midrule
                    Simple MLP & 0.0512 & 0.1304 & 0.0307 & 0.9496 & 0.0081 & 0.0200 & 0.2108 & 0.9932  \\
                    GraphSAGE \citep{hamilton2017inductive} & 0.0461 & 0.1050 & 0.0270 & 0.9695 & 0.0087 & 0.0184 & \underline{0.1476} & \underline{0.9964} \\
                    PointNet \citep{qi2017pointnet} & 0.0494 & 0.1104 & 0.0298 & 0.9583 & 0.0253 & 0.0996 & 0.1973 & 0.9919 \\
                    Graph U-Net \citep{gao2019graph} & 0.0471 & 0.1102 & 0.0226 & 0.9725 & 0.0076 & 0.0144 & 0.1677 & 0.9949 \\
                    MeshGraphNet \citep{pfaff2021learning} & 0.0354 & 0.0781 & 0.0168 & 0.9840 & 0.0214 & 0.0387 & 0.2252 & 0.9945 \\
                    \midrule
                    GNO \citep{li2020neural} & 0.0383 & 0.0815 & 0.0172 & 0.9834 & 0.0269 & 0.0405 & 0.2016 & 0.9938 \\
                    Galerkin \citep{Cao2021ChooseAT} & 0.0339 & 0.0878 & 0.0179 & 0.9764 & 0.0074 & 0.0159 & 0.2336 & 0.9951 \\
                    geo-FNO \citep{Li2022FourierNO} & \textcolor{gray}{0.1670} & \textcolor{gray}{0.2378} & \textcolor{gray}{0.0664} & \textcolor{gray}{0.8280} & \textcolor{gray}{0.0361} & \textcolor{gray}{0.0301} & \textcolor{gray}{0.6161} & \textcolor{gray}{0.9257} \\
                    GNOT \citep{hao2023gnot} & 0.0329 & 0.0798 & 0.0178 & 0.9833 & \underline{0.0049} & \underline{0.0152} & 0.1992 & 0.9942 \\
                    GINO \citep{li2023geometryinformed} & 0.0386 & 0.0810 & 0.0184 & 0.9826 & 0.0297 & 0.0482 & 0.1821 & 0.9958 \\
                    3D-GeoCA \citep{anonymous2023geometryguided} & \underline{0.0319} & \underline{0.0779} & \underline{0.0159} & \underline{0.9842} & / & / & / & / \\
                    \midrule
                    \textbf{Transolver (Ours)} & \textbf{0.0207} & \textbf{0.0745} & \textbf{0.0103} & \textbf{0.9935} & \textbf{0.0037} & \textbf{0.0142} & \textbf{0.1030} & \textbf{0.9978} \\
				\bottomrule
			\end{tabular}
		\end{sc}
        \begin{tablenotes}
      \footnotesize
      \item[] $\ast$ Since both datasets are {with} unstructured mesh, not all the baselines are applicable. Concretely, the capability to handle unstructured mesh of typical neural operators (e.g., U-NO, LSM, etc) is based on geo-FNO \citeyearpar{Li2022FourierNO}, which degenerates a lot in complex geometrics. Some Transformer-based models will also come across unstable training due to the massive mesh points, such as ONO and OFormer.
      \end{tablenotes}
	\end{small}
    \vspace{-5pt}
\end{table*}

As shown in Table \ref{tab:mainres_design}, we can find that Transolver also excels in these two complex tasks among various geometric deep models and neural operators. In addition to achieving more accurate physics field estimation in both volume and surface, Transolver also performs best in design-oriented metrics, including drag and lift coefficient, as well as Spearman's rank correlation. Note that this metric measures the correlation between the ranking distribution of real coefficients and model-estimated values on all test samples, which quantifies the model's ability to rank different car or airfoil designs, {therefore} especially essential to shape optimization.

It is also worth noticing that geo-FNO degenerates seriously in Shape-Net Car. This is because geo-FNO transforms the input geometry into a uniform latent grid based on Fourier bases, which is insufficient for surface-volume {hybrid} geometry. Besides, 3D-GeoCA enhances GNO by employing the advanced 3D geometric deep model Point-BERT \cite{yu2022point} as the feature encoder. However, even empowered with the advanced 3D geometric encoder, it still underperforms the Transolver. These results further demonstrate the advantages of our model in solving PDEs.

\begin{table}[t]
    \vspace{-5pt}
	\caption{Ablations on Physics-Attention. We experiment on two variants: changing the number of slices $M$ and replacing the learnable slices with fixed regular squares in $4\times 4$ size (\#Regular Squares). Efficiency is calculated on inputs with 1024 unstructured mesh points and batch size as 1. See Appendix \ref{appdix:ablations} for full ablations.}
	\label{tab:ablation}
	\vskip 0.1in
	\centering
	\begin{small}
		\begin{sc}
			\renewcommand{\multirowsetup}{\centering}
			\setlength{\tabcolsep}{2.3pt}
			\scalebox{1}{
			\begin{tabular}{l|c|cc|cc}
				\toprule
			\multicolumn{2}{c|}{\multirow{2}{*}{Ablations}} & \scalebox{0.95}{\#Memory} & \scalebox{0.95}{\#Time} & \multicolumn{2}{c}{\scalebox{0.95}{Relative L2}} \\
                \multicolumn{2}{c|}{} & \scalebox{0.8}{(GB)} & \scalebox{0.8}{(s/epoch)} & \scalebox{0.95}{Elasticity} & \scalebox{0.95}{Darcy} \\
			    \midrule
                   & {1} & 0.60 & 37.76 & 0.0148 & 0.0386 \\
                   & {8} & 0.60 & 37.82 & 0.0071 & 0.0096 \\
                   & {16} & 0.61  & 37.96 & 0.0067 & 0.0067 \\
                   & 32 & 0.62 & 38.00 & 0.0067 & 0.0063  \\
                  Number & 64 & 0.64 & 38.18 & 0.0064 & 0.0059 \\
                  of Slices & 96 & 0.68 & 38.31 & 0.0061 & 0.0055  \\
                  & 128 & 0.69 & 38.78 & 0.0058 & 0.0054 \\
                  & {256} & 0.81 & 39.13 & \textbf{0.0054} & \textbf{0.0050}  \\
                  & {512} & 1.01 & 39.75 & 0.0059 & 0.0056 \\
                  & {1024} & 1.53 & 40.49 & 0.0068 & 0.0055  \\
                  \midrule
                  \multicolumn{2}{l|}{{\scalebox{0.8}{Regular Squares}}} & / & / & / & 0.0088 \\
				\bottomrule
			\end{tabular}}
		\end{sc}
	\end{small}
	\vspace{-10pt}
\end{table}

\vspace{-5pt}
\paragraph{Ablations} In addition to the main results, we also include ablations of our design in Table \ref{tab:ablation}. In general, we can find that increasing the number of slices will benefit the final performance, which enables the model to capture more fine-grained physical states but also brings more computation costs. To balance efficiency and performance, we set $M$ as 64 for Elasticity and Darcy and choose it from $\{32,64\}$ for other benchmarks. {Note that when we set $M=1$, the Physics-Attention will degenerate to a global pooling operator, which omits all the physical correlations and will cause a serious performance drop. Also, we can further observe that an extremely large $M$ (e.g.~1024) will slightly decrease the final performance. This may be because a too-large $M$ will make the physics domain seriously fragmented, resulting in too many tokens for subsequent attention calculation. In principle, the best choice of slice number is up to the physical property of the target PDE. In our experiments, $M$ is easy-to-tune in the range from 32 to 256.}

Besides, replacing learnable slices with fixed regular squares will damage the model performance seriously, even for Darcy which is originally discretized into regular grids. This result further demonstrates the advantages of capturing physical states over solely computing in the discretized domain.

\begin{figure*}[t]
\begin{center}
\centerline{\includegraphics[width=\textwidth]{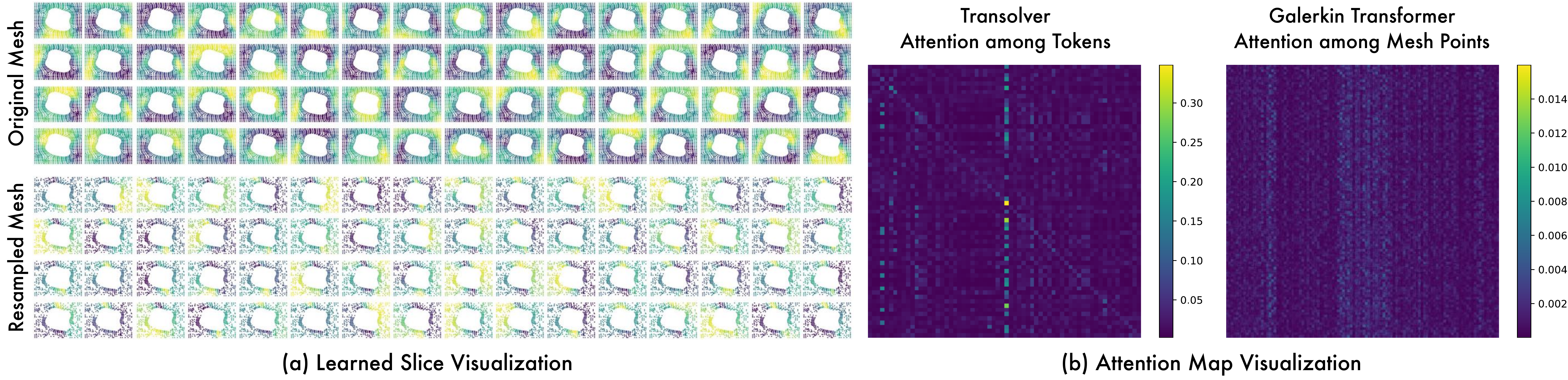}}
    \vspace{-5pt}
	\caption{Physics-Attention visualization on Elasticity: (a) slice weights in the last layer of Transolver for both original and resampled meshes, (b) attention maps of the last layer in Transolver and Galerkin Transformer \cite{Cao2021ChooseAT}. See Appendix \ref{appdix:full_vis_slice} for more visualizations.}
	\label{fig:visual}
\end{center}
\vspace{-20pt}
\end{figure*}

\begin{figure}[t]
\begin{center}
\centerline{\includegraphics[width=\columnwidth]{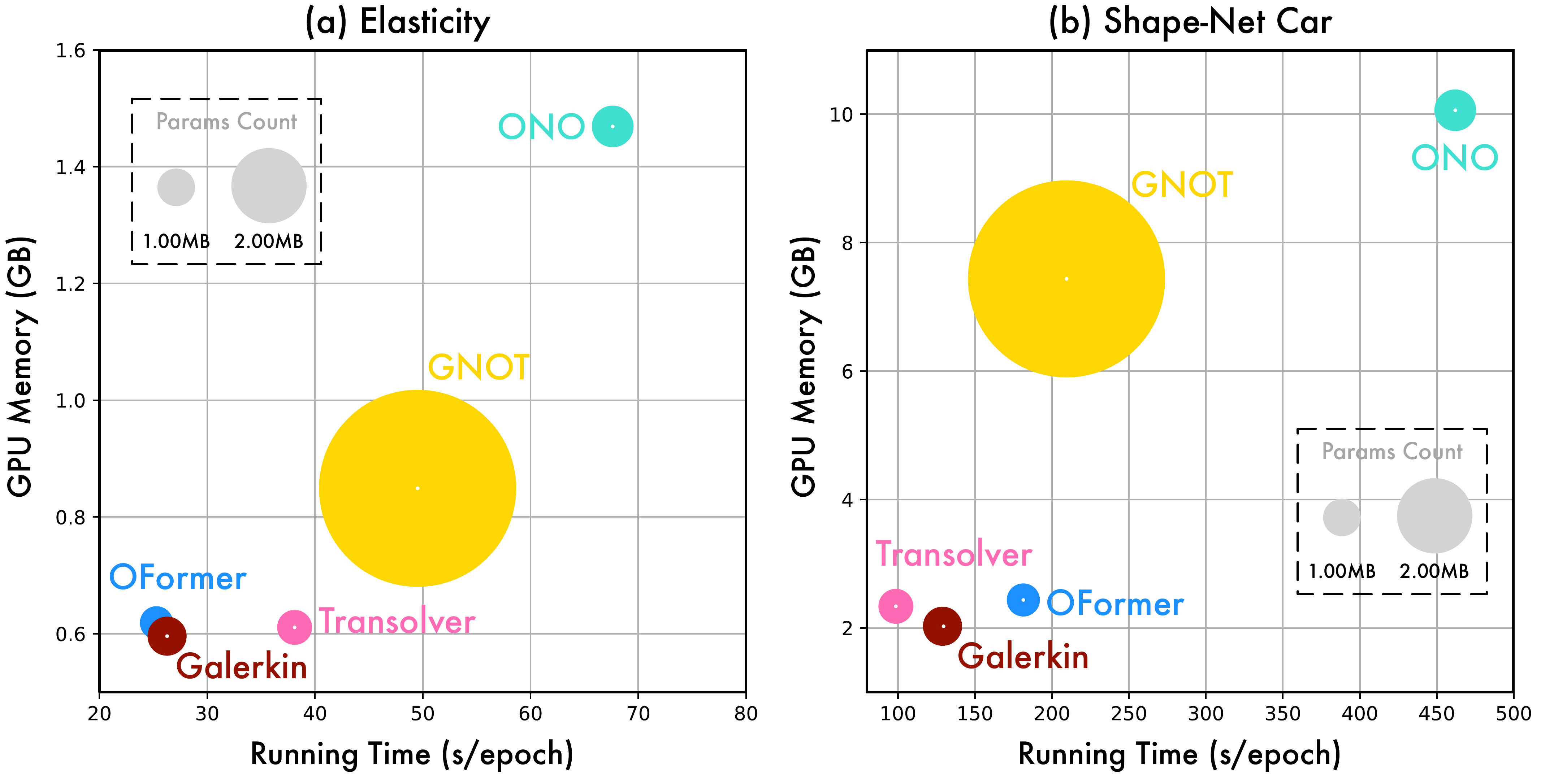}}
    \vspace{-5pt}
	\caption{Efficiency comparison on Elasticity (972 mesh points) and Shape-Net Car (32,186 mesh points). Metrics are measured on the batch size as 1 and one epoch contains 1000 iterations. The growth curves w.r.t.~input mesh size are provided in Appendix \ref{appdix:efficiency}.}
	\label{fig:efficiency}
\end{center}
\vspace{-25pt}
\end{figure}

\begin{figure*}[t]
\begin{center}
\centerline{\includegraphics[width=\textwidth]{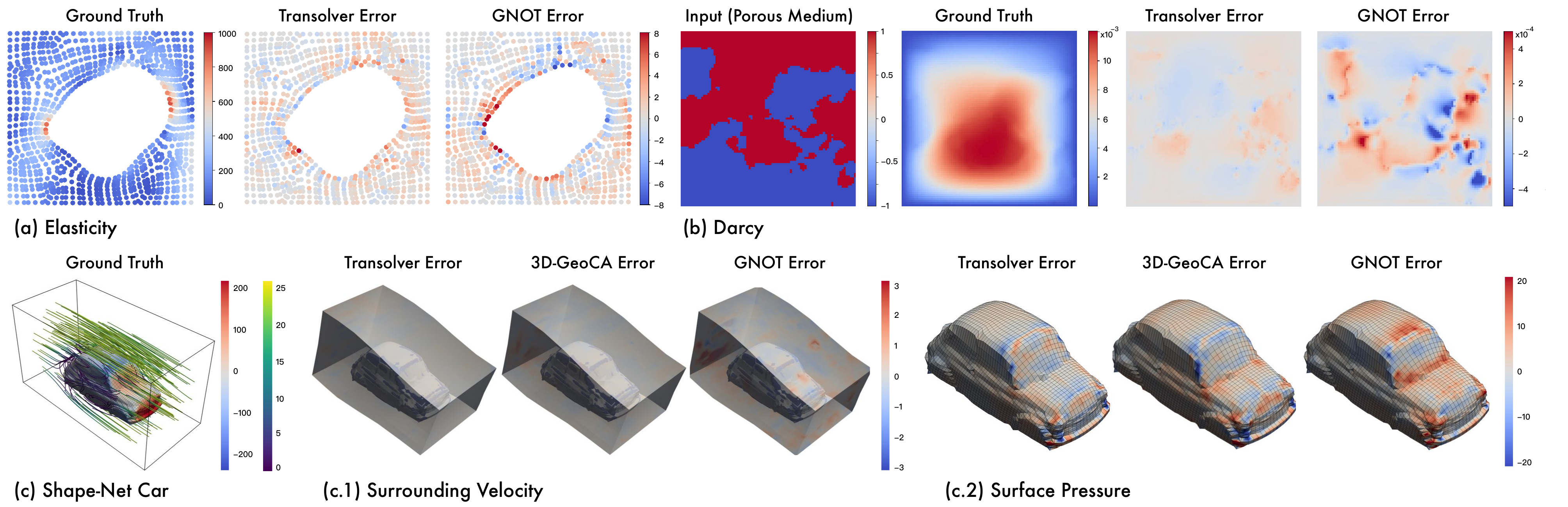}}
    \vspace{-5pt}
	\caption{Case study on error maps of different models. Notably, Shape-Net Car requires to predict the surrounding velocity and surface pressure simultaneously. For clearness, we plot the error on volume and surface separately. See Appendix \ref{appdix:full_vis_showcase} for more showcases.}
	\label{fig:case}
\end{center}
\vspace{-25pt}
\end{figure*}

\vspace{-5pt}
\paragraph{Efficiency} To further demonstrate the model practicability, we provide the model efficiency comparison in Figure~\ref{fig:efficiency}. We can find that in comparison with other Transformer-based models, Transolver presents favorable efficiency, considering running time, GPU memory and model parameters. Specifically, in Elasticity, Transolver achieves the 25.6\% error reduction (0.0064 vs.~0.0086) with 5x fewer parameters and 1.3x running speed than the second-best model GNOT. Especially for large-scale meshes, our design of linear-complexity Physics-Attention benefits more significantly, where Transolver surpasses Galerkin Transformer and OFormer in both running time and final performance.

\subsection{Model Analysis}

\paragraph{Physics-Attention analysis} Toward an intuitive understanding of Transolver, we visualize Physics-Attention in Figure \ref{fig:visual}. It is observed that slices can precisely capture mesh points under similar physics states. Also, it is worth noticing that we adopt the Softmax function in learning slice weights (Eq.~\eqref{equ:learn_slice}). Since Softmax function will make the learned weight distribution sharper, slices will also be optimized to capture more diverse patterns correspondingly, which empowers the model with better representation capability for intricate physics. {For example, in the inner-stress estimation task of Elasticity, we can find that slices learn to cover diverse subareas that are under similar extrusion degrees, such as the left and right of the hollow area, corners of the material, etc.} Further, learning physical states also frees our model from complex and unwieldy meshes. As shown in Figure \ref{fig:visual}(a), we randomly sample 50\% input mesh of Elasticity. Surprisingly, Transolver can still capture physical states precisely even for broken meshes.

\begin{table}[t]
    \vspace{-5pt}
	\caption{{Comparison of linear attention in Galerkin Transformer and Physics-Attention in Transolver. Kullback–Leibler (KL) divergence between learned attention weights and a uniform distribution is recorded, which is averaged from the whole test set.}}
	\label{tab:kl_distance}
	\vskip 0.1in
	\centering
	\begin{small}
		\begin{sc}
			\renewcommand{\multirowsetup}{\centering}
			\setlength{\tabcolsep}{2pt}
			\scalebox{1}{
			\begin{tabular}{l|c|c}
				\toprule
			\multirow{2}{*}{Benchmarks} & Galerkin & Transolver \\
                   & \scalebox{0.9}{\cite{Cao2021ChooseAT}} & \scalebox{0.9}{(Ours)} \\
			    \midrule
                  Elasticity (972 mesh points) & 0.3803 & \textbf{1.7795} \\
                  Darcy (7,225 mesh points) &	0.2739 & \textbf{1.8274} \\
				\bottomrule
			\end{tabular}}
		\end{sc}
	\end{small}
	\vspace{-5pt}
\end{table}

Besides, we also compare learned attention maps of the Transolver and Galerkin Transformer. As aforementioned, directly calculating attention among massive mesh points may overwhelm the model from learning informative relations \cite{wu2022flowformer}. {Statistical results in Table~\ref{tab:kl_distance} present that the linear attention among mesh points is closer to a degenerated uniform distribution than Physics-Attention.} {As a supplement, we also include the attention visualization in Figure~\ref{fig:visual}(b), where we can observe that the attention map among physics-aware tokens is much sharper than mesh-point attention. These results further verify the benefits of our physics-inspired design in facilitating attention learning.}

\begin{figure}[t]
\begin{center}
\centerline{\includegraphics[width=\columnwidth]{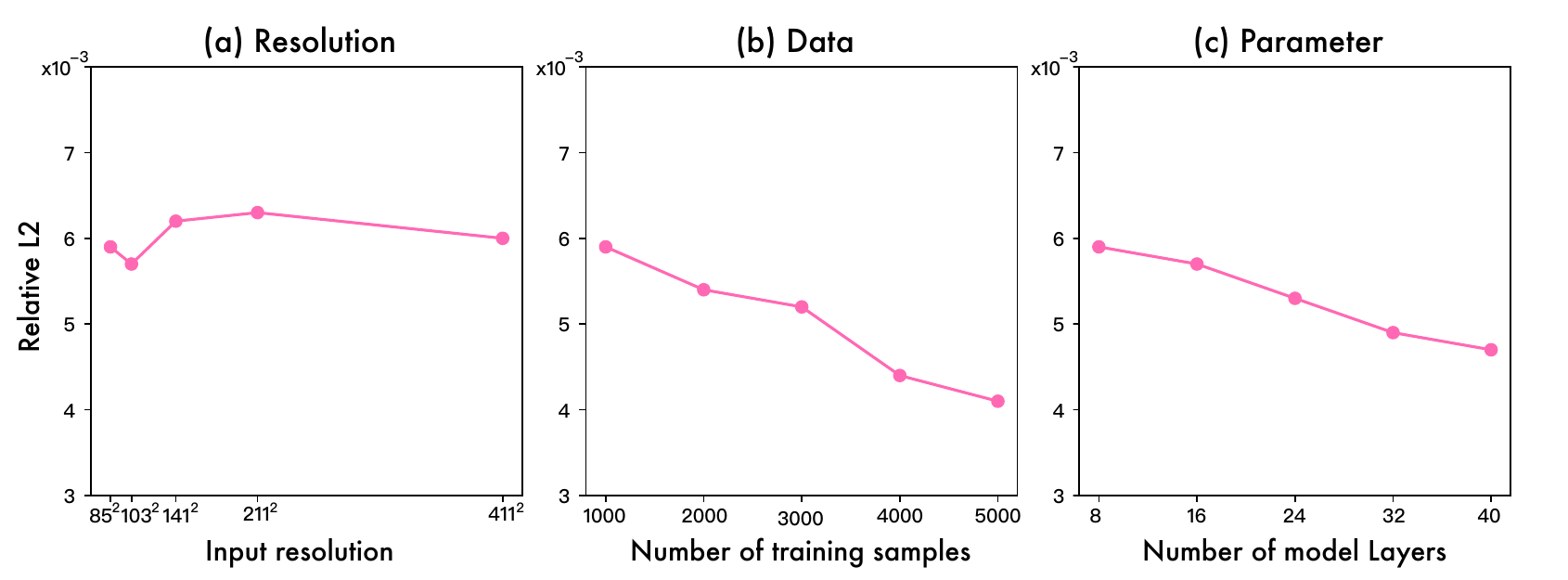}}
    \vspace{-5pt}
	\caption{Model scalability on Darcy. We re-train the Transolver for PDEs on different (a) resolutions, (b) training samples and (c) model layers. See Appendix \ref{appdix:scale} for full results on all benchmarks.}
	\label{fig:scalability}
\end{center}
\vspace{-25pt}
\end{figure}

\vspace{-5pt}
\paragraph{Case study} {We plot error maps of different models in Figure \ref{fig:case} to provide a clear comparison. It is observed that Transolver performs significantly better in boundaries and multiple material junctions}, such as the extrusion zone of elasticity, the medium boundary in Darcy and the curved region in the car surface. {Note that these areas usually involve intricate physical interactions, which require the model to precisely capture the geometry information and its hidden physics, thereby further verifying the effectiveness of our design in handling complex geometrics.} Especially, Transolver surpasses 3D-GeoCA \cite{anonymous2023geometryguided} and GNOT \cite{hao2023gnot} in predicting the wake flow behind the car and the pressure in the front area, demonstrating its capability in solving multiphysics PDEs on {hybrid} geometrics.

\vspace{-5pt}
\paragraph{PDE solving at scale} One well-acknowledged merit of Transformers is their scalability \cite{achiam2023gpt}. Thus, we also include a comprehensive test about the scalability of Transolver on resolution, data and model parameters in Figure \ref{fig:scalability}. Specifically, we gradually increase the PDE resolution to 25x of the original setting, training data and model parameters to 5x. It is observed that Transolver can achieve consistent performance at scaled resolutions and benefit from more training data and larger model parameters, posing a potential for a large-scale pre-trained PDE solver.

\vspace{-5pt}
\paragraph{Out-of-distribution (OOD) generalization} {In the previous research, neural solvers are mainly trained and tested with samples under the same or in-distribution PDE coefficients and varied initial or boundary conditions. For example, in the car design task, different samples of diverse shapes are all generated under the headwind with the same speed. As for the airfoil design, although training samples contain various Reynolds and angles of attacks, the test set is still under the same range of Reynolds and angles as the training set. Here, to further examine the generalizability of Transolver in real-world applications, we also experiment with OOD airfoil design tasks, where the test set has completely different Reynolds and angles of attacks.}

\begin{table}[t]
    \vspace{-5pt}
	\caption{{OOD generalization experiments on the AirfRANS. Relative error of lift coefficient ($C_L$) and Spearman’s rank correlations ($\rho_L$) are recorded. See Appendix \ref{appdix:ood} for complete results.}}
	\label{tab:ood}
	\vskip 0.1in
	\centering
	\begin{small}
		\begin{sc}
			\renewcommand{\multirowsetup}{\centering}
			\setlength{\tabcolsep}{2.5pt}
			\scalebox{1}{
			\begin{tabular}{l|cc|cc}
				\toprule
			\multirow{3}{*}{Models} & \multicolumn{2}{c|}{OOD Reynolds} & \multicolumn{2}{c}{OOD Angles}\\
   \cmidrule(lr){2-3}\cmidrule(lr){4-5}
   & $C_{L}$ $\downarrow$ & $\rho_{L}$ $\uparrow$ & $C_{L}$ $\downarrow$ & $\rho_{L}$ $\uparrow$ \\
			    \midrule
                    Simple MLP & 0.6205 & 0.9578 & 0.4128 & 0.9572 \\
                  GraphSAGE \citeyearpar{hamilton2017inductive} & 0.4333 & 0.9707 & \underline{0.2538} & 0.9894 \\
                  PointNet \citeyearpar{qi2017pointnet} & 0.3836 & 0.9806 & 0.4425 & 0.9784 \\
                  Graph U-Net \citeyearpar{gao2019graph} & 0.4664 & 0.9645 & 0.3756 & 0.9816 \\
                  \scalebox{0.95}{MeshGraphNet \citeyearpar{pfaff2021learning}} & \textcolor{gray}{1.7718} & \textcolor{gray}{0.7631} & \textcolor{gray}{0.6525} & \textcolor{gray}{0.8927}\\
                  \midrule
                  GNO \citeyearpar{li2020neural} & 0.4408 & \underline{0.9878} & 0.3038 & 0.9884\\
                  Galerkin \citeyearpar{Cao2021ChooseAT} & 0.4615 & 0.9826 & 0.3814 & 0.9821\\
                  GNOT \citeyearpar{hao2023gnot} & \underline{0.3268} & 0.9865 & 0.3497 & 0.9868 \\
                  GINO \citeyearpar{li2023geometryinformed} & 0.4180 & 0.9645 & 0.2583 & \underline{0.9923} \\
                  \midrule
                  \textbf{Transolver (Ours)} & \textbf{0.2996} & \textbf{0.9896} & \textbf{0.1500} & \textbf{0.9950} \\
				\bottomrule
			\end{tabular}}
		\end{sc}
	\end{small}
	\vspace{-15pt}
\end{table}

{As presented in Table \ref{tab:ood}, Transolver can handle OOD samples well, where it consistently performs best with Spearman's rank correlations of nearly 99\% on unseen Reynolds and angles of attacks. These results indicate that Transolver not only fits the training data but {also} captures some generalizable physical information, further highlighting the {advantage} of calculating attention among physical states.}

\section{Conclusions and Future Work}
This paper presents Transolver to solve PDEs on general geometrics. Unlike prior Transformer operators, Transolver proposes to apply attention to learned physical states, which empowers our model with endogenetic geometry-general capacity and benefits physical correlations modeling. Transolver not only performs impressively on well-established benchmarks but also excels in practical design tasks, which consist of extremely complex geometrics and tanglesome multiphysics interactions. Extensive analyses are provided to verify the model's performance, efficiency, scalability and out-of-distribution generalizability. {In the future, we will further explore the large-scale pre-training of Transolver in {pursuit of} foundation models for PDE solving.}

\section*{Acknowledgements}

This work was supported by the National Key Research and Development Plan (2021YFC3000905), the National Natural Science Foundation of China (U2342217 and 62022050), the BNRist Innovation Fund (BNR2024RC01010), and the National Engineering Research Center for Big Data Software. We thank our colleagues Hang Zhou and Yuezhou Ma for their suggestions in the OOD experiments of this paper.

\section*{Impact Statement}

This paper presents work whose goal is to advance the deep learning research for solving PDE. The proposed method is based on a new idea of capturing correlations among learned physical states, which could be inspiring for future research. And our model also performs well in {large-scale} design tasks, which can be useful in industrial production. Note that this paper mainly focuses on the scientific problem. When developing our approach, we are fully committed to ensuring ethical considerations are taken into account. Thus we believe there are no potential ethical risks in our work. 


\bibliography{example_paper}
\bibliographystyle{icml2024}

\newpage
\appendix
\onecolumn

\section{Proof of Theorem \ref{theorem:understanding}}\label{appendix:proof}

Firstly, we would like to present how to formalize canonical attention into a Monte-Carlo approximation of an integral operator as a Lemma, which is summarized from proofs provided by \citet{Cao2021ChooseAT} and \citet{jmlr_operator}.

\begin{lemma}\label{lemma:integral}
    The canonical attention mechanism in Transformers is a Monte-Carlo approximation of an integral operator.
\end{lemma}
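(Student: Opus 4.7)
The plan is to interpret the discrete softmax attention as a quadrature estimator of a continuous integral operator, following the viewpoint of \citet{Cao2021ChooseAT} and \citet{jmlr_operator}. First, I would regard the per-token features $\mathbf{x}_{1},\ldots,\mathbf{x}_{N}$ as point samples of an underlying field $\boldsymbol{u}:\Omega\to\mathbb{R}^{C}$ evaluated at the mesh locations $\mathbf{g}_{1},\ldots,\mathbf{g}_{N}\in\Omega$. The query, key, and value projections then lift to continuous maps $\boldsymbol{q}(\boldsymbol{\xi})=W_{Q}\boldsymbol{u}(\boldsymbol{\xi})$, $\boldsymbol{k}(\boldsymbol{\xi})=W_{K}\boldsymbol{u}(\boldsymbol{\xi})$, $\boldsymbol{v}(\boldsymbol{\xi})=W_{V}\boldsymbol{u}(\boldsymbol{\xi})$ defined on all of $\Omega$, so that each token is simply an evaluation of these continuous fields at a mesh node.

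Next, for a query location $\mathbf{g}^\ast$ I would rewrite the canonical attention output as
\[
\mathrm{Attn}(\mathbf{x})(\mathbf{g}^\ast)
\;=\;
\frac{\tfrac{1}{N}\sum_{j=1}^{N}\exp\!\big(\boldsymbol{q}(\mathbf{g}^\ast)^{\sf T}\boldsymbol{k}(\mathbf{g}_{j})/\sqrt{C}\big)\,\boldsymbol{v}(\mathbf{g}_{j})}{\tfrac{1}{N}\sum_{j=1}^{N}\exp\!\big(\boldsymbol{q}(\mathbf{g}^\ast)^{\sf T}\boldsymbol{k}(\mathbf{g}_{j})/\sqrt{C}\big)},
\]
in which both arithmetic means are unbiased Monte-Carlo estimators of the corresponding $\mu$-expectations, where $\mu$ is the (approximately uniform) sampling measure induced by the mesh on $\Omega$. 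Assuming $\boldsymbol{u}$ is bounded and the linear projections are continuous, the law of large numbers gives almost-sure convergence of the numerator and denominator to their integrals over $\Omega$, and Slutsky's theorem lets me pass to the ratio:
\[
\lim_{N\to\infty}\mathrm{Attn}(\mathbf{x})(\mathbf{g}^\ast)
\;=\;
\int_{\Omega}\kappa(\mathbf{g}^\ast,\boldsymbol{\xi})\,\boldsymbol{v}(\boldsymbol{\xi})\,d\mu(\boldsymbol{\xi}),
\]
with the softmax-induced kernel
\[
\kappa(\mathbf{g}^\ast,\boldsymbol{\xi})
\;=\;
\frac{\exp\!\big(\boldsymbol{q}(\mathbf{g}^\ast)^{\sf T}\boldsymbol{k}(\boldsymbol{\xi})/\sqrt{C}\big)}{\int_{\Omega}\exp\!\big(\boldsymbol{q}(\mathbf{g}^\ast)^{\sf T}\boldsymbol{k}(\boldsymbol{\xi}')/\sqrt{C}\big)\,d\mu(\boldsymbol{\xi}')}.
\]
This is exactly an integral operator of the form in Eq.~\eqref{equ:integral}, and the discrete attention is its $N$-point Monte-Carlo estimator, which is all the lemma asserts.

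The main obstacle is the softmax normalization, which nonlinearly couples the numerator and denominator and so blocks a direct single-step Monte-Carlo argument. I would handle it by first establishing uniform positive lower and upper bounds on $\exp(\boldsymbol{q}^{\sf T}\boldsymbol{k}/\sqrt{C})$ over a compact query set, which follows from boundedness of $\boldsymbol{u}$ and continuity of the projections, so that the denominator stays bounded away from zero uniformly in $N$. The continuous mapping theorem then pushes the limit through the ratio and delivers the claimed convergence, mirroring the rigorous treatments of \citet{Cao2021ChooseAT} and \citet{jmlr_operator} whose blueprint I would follow to complete the argument.
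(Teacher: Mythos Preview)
Your proposal is correct and follows essentially the same approach as the paper: both identify the softmax-attention output with a kernel integral operator on $\Omega$ whose kernel is the normalized exponential of the query--key inner product, and both treat the discrete sum over mesh points as the Monte-Carlo discretization of that integral. The paper argues in the reverse direction (start from the integral, plug in the $N$-point Monte-Carlo estimate, recover attention) and absorbs $W_V$ into the kernel rather than into the integrand, but the substance is identical; your added care with the law of large numbers, Slutsky, and the uniform lower bound on the denominator simply supplies rigor that the paper's sketch omits.
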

\begin{proof} Given input function $\boldsymbol{u}:\Omega\to\mathbb{R}^{C}$, the integral operation $\mathcal{G}$ defined on the function space $\Omega\to\mathbb{R}^{C}$ is formalized as:
\begin{equation}
	\begin{split}\label{equ:integral_form}
 \mathcal{G}(\boldsymbol{u})(\mathbf{g}^\ast)=\int_{\Omega}\kappa(\mathbf{g}^\ast, \boldsymbol{\xi})\boldsymbol{u}(\boldsymbol{\xi})\mathrm{d}\boldsymbol{\xi},
    \end{split}
\end{equation}
where $\mathbf{g}^\ast\in\Omega\subset\mathbb{R}^{C_{\mathbf{g}}}$ and $\kappa(\cdot,\cdot)$ denotes the kernel function defined on $\Omega$. According to the formalization of attention, we propose to define the kernel function as follows:
\begin{equation}
	\begin{split}\label{equ:kernel}
\kappa(\mathbf{g}^\ast, \boldsymbol{\xi})= \left(\int_{\Omega}\exp \left(\left(\mathbf{W}_{\mathbf{q}}\boldsymbol{u}(\boldsymbol{\xi}^\prime)\right) \left(\mathbf{W}_{\mathbf{k}}\boldsymbol{u}(\boldsymbol{\xi})\right)^{\sf T}\right)\mathrm{d}\boldsymbol{\xi}^\prime\right)^{-1}\exp \left(\left(\mathbf{W}_{\mathbf{q}}\boldsymbol{u}(\mathbf{g}^\ast)\right) \left(\mathbf{W}_{\mathbf{k}}\boldsymbol{u}(\boldsymbol{\xi})\right)^{\sf T}\right)\mathbf{W}_{\mathbf{v}},
    \end{split}
\end{equation}
where $\mathbf{W}_{\mathbf{q}}, \mathbf{W}_{\mathbf{k}}, \mathbf{W}_{\mathbf{v}}\in\mathbb{R}^{C\times C}$. 

Suppose that there are $N$ discretized mesh points $\{\mathbf{g}_{1},\cdots,\mathbf{g}_{N}\}$, where $\mathbf{g}_{i}\in\Omega\subset\mathbb{R}^{C_{\mathbf{g}}}$. Approximating the inner-integral in Eq.~\eqref{equ:kernel} by Monte-Carlo, we have:
\begin{equation}
	\begin{split}
\int_{\Omega}\exp \left(\left(\mathbf{W}_{\mathbf{q}}\boldsymbol{u}(\boldsymbol{\xi}^\prime)\right) \left(\mathbf{W}_{\mathbf{k}}\boldsymbol{u}(\boldsymbol{\xi})\right)^{\sf T}\right)\mathrm{d}\boldsymbol{\xi}^\prime
 \approx \frac{|\Omega|}{N} \sum_{i=1}^{N}\exp \left(\left(\mathbf{W}_{\mathbf{q}}\boldsymbol{u}(\mathbf{g}_{i})\right) \left(\mathbf{W}_{\mathbf{k}}\boldsymbol{u}(\boldsymbol{\xi})\right)^{\sf T}\right).
    \end{split}
\end{equation}
Applying the above equation to Eq.~\eqref{equ:integral_form} and using the same approximation for the outer-integral, we have:
\begin{equation}
	\begin{split}\label{equ:final}
 \mathcal{G}(\boldsymbol{u})(\mathbf{g}^\ast)\approx\sum_{i=1}^{N}\frac{\exp \left(\left(\mathbf{W}_{\mathbf{q}}\boldsymbol{u}(\mathbf{g}^\ast)\right) \left(\mathbf{W}_{\mathbf{k}}\boldsymbol{u}(\mathbf{g}_{i})\right)^{\sf T}\right)\mathbf{W}_{\mathbf{v}}\boldsymbol{u}(\mathbf{g}_{i})}{ \sum_{j=1}^{N}\exp \left(\left(\mathbf{W}_{\mathbf{q}}\boldsymbol{u}(\mathbf{g}_{j})\right) \left(\mathbf{W}_{\mathbf{k}}\boldsymbol{u}(\mathbf{g}_{i})\right)^{\sf T}\right)},
    \end{split}
\end{equation}
which is the calculation of the attention mechanism with $\mathbf{W}_{\mathbf{q}}, \mathbf{W}_{\mathbf{k}}, \mathbf{W}_{\mathbf{q}}$ as linear layers for queries, keys and values.
\end{proof}

\begin{remark}[\textbf{Solving PDEs by learning integral neural operators}] \citet{li2020neural} firstly formalized the PDE-solving task as learning neural operators and defined the operator learning process as an iterative architecture, which corresponds to multiple layers in deep models. Then, \citet{li2021fourier} further defined each iteration step as a composition of a non-local integral operator and a local, nonlinear activation function. Since the nonlinear activation function can be easily parameterized by the feedforward layer. The key to solving PDEs is learning non-local integral operators. Thus, Lemma \ref{lemma:integral} proves that a Transformer model can theoretically work as a neural operator and be applied to solve PDEs.
\end{remark}

\begin{lemma}\label{lemma:projection}
    Suppose that $\Omega$ is a countable domain, the slice domain $\Omega_{\mathrm{s}}$ is isomorphic to $\Omega$.
\end{lemma}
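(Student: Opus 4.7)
The goal is to establish a bijective correspondence between the mesh domain $\Omega$ and the slice domain $\Omega_{\mathrm{s}}$ that supplies the change-of-variables used inside Theorem~\ref{theorem:understanding}. My plan is to make $\Omega_{\mathrm{s}}$ concrete and then exhibit an explicit map, showing it is both well-defined and bijective under the countability hypothesis.

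First, I would set $\Omega_{\mathrm{s}} := \phi(\Omega)$, where $\phi:\Omega\to\Delta^{M-1}$ is the slice-assignment map from Eq.~\eqref{equ:learn_slice}, namely $\phi(\mathbf{g}_i) = \operatorname{Softmax}(\operatorname{Project}(\mathbf{x}_i))$ with $\mathbf{x}_i$ the linear embedding of $(\mathbf{g}_i,\mathbf{u}_i)$ and $\Delta^{M-1}$ the standard $(M{-}1)$-simplex. Surjectivity of $\phi$ onto $\Omega_{\mathrm{s}}$ is immediate by this definition, so the only substantive task is injectivity.

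Second, I would establish injectivity of $\phi$. The linear embedding and the Project layer are affine maps between finite-dimensional Euclidean spaces, and Softmax is strictly monotone and injective modulo a shared additive shift in its pre-activation (a degree of freedom that is absorbed into the Project bias). For generic choices of the learned parameters—that is, outside a Lebesgue-null set of weight configurations—the composition separates distinct mesh points. Since $\Omega$ is countable, combining injectivity with surjectivity yields a bijection $\phi:\Omega\to\Omega_{\mathrm{s}}$, and pushing forward the empirical/counting measure on $\Omega$ through $\phi$ endows $\Omega_{\mathrm{s}}$ with a compatible measure. This pushforward identity is exactly what permits substituting $\boldsymbol{\xi}\in\Omega$ by $\boldsymbol{\xi}_{\mathrm{s}}\in\Omega_{\mathrm{s}}$ in the parent theorem.

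The main obstacle I anticipate is the ambiguity of the word \emph{isomorphism}. For countable $\Omega$, a set-theoretic bijection together with the pushforward measure is enough to rewrite $\int_{\Omega}f(\boldsymbol{\xi})\,\mathrm{d}\boldsymbol{\xi}$ as an integral over $\Omega_{\mathrm{s}}$, which is all Theorem~\ref{theorem:understanding} actually invokes. If one instead demanded the stronger \emph{diffeomorphism} alluded to in that theorem's sketch, one would face a dimensional mismatch between the ambient $\Omega\subset\mathbb{R}^{C_{\mathbf{g}}}$ and the $(M{-}1)$-dimensional simplex $\Delta^{M-1}$. Under the standing countability hypothesis this difficulty disappears, because both sides carry only the discrete topology, so any bijection is automatically a homeomorphism and, vacuously, smooth—making the construction above sufficient. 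I would therefore be careful in the write-up to state clearly that ``isomorphism'' is meant in the category of countable measure spaces, and to flag the genericity assumption on the learned parameters as the only nontrivial hypothesis behind injectivity.
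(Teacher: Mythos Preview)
Your route differs from the paper's. You take $\Omega_{\mathrm{s}}:=\phi(\Omega)\subset\Delta^{M-1}$ to be the \emph{image} of the softmax slice-weight map and then argue that $\phi$ is injective for generic learned parameters, so the bijection is almost definitional. The paper instead leaves $\Omega_{\mathrm{s}}$ as a separately indexed countable family of slices $\{\mathbf{s}_j\}_{j\ge 1}$ and builds the bijection constructively: it enumerates $\Omega=\{\mathbf{g}_i\}_{i\ge 1}$, fixes an integer $K\ge 1$, and for each $i$ assigns $\mathbf{g}_i$ to the slice $\mathbf{s}_j$ maximizing the weight $w_{\mathbf{g}_i,\mathbf{s}_j}$ over the sliding window $\lfloor (i{-}1)/K\rfloor K < j \le (\lfloor (i{-}1)/K\rfloor{+}1)K$, subject to each slice being hit at most once---effectively a block-by-block greedy perfect matching. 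Your argument is shorter and avoids the combinatorics, at the price of a genericity hypothesis on the weights and of \emph{redefining} $\Omega_{\mathrm{s}}$; the paper's construction is parameter-free and keeps $\Omega_{\mathrm{s}}$ as the slice-indexed domain that is later discretized into $M$ tokens in the Monte-Carlo step of Theorem~\ref{theorem:understanding}. The one thing you should double-check is that your simplex-valued $\Omega_{\mathrm{s}}$ still supports that final discretization into $\{\boldsymbol{\xi}_{\mathrm{s},j}\}_{j=1}^{M}$, since downstream the elements of $\Omega_{\mathrm{s}}$ are used as slice indices rather than as weight vectors.
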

\begin{proof}
    For the $i$-th element $\mathbf{g}_{i}\in\Omega$, we define its slice weight to the $j$-th slice $\mathbf{s}_j$ as $w_{\mathbf{g}_{i},\mathbf{s}_j}\in\mathbb{R}$. Given a constant $K\ge 1$ and $K\in\mathbb{N}$, since input domain $\Omega$ is countable, we can construct the projection $\boldsymbol{g}$ between input domain and slice domain $\Omega_{\mathrm{s}}$ as follows: for $i$ iterated from 1 to $+\infty$, its projected slice is defined as
    \begin{equation}
	\begin{split}\label{equ:projection}
\boldsymbol{g}(\mathbf{g}_{i})=\mathop{\arg\max}_{\mathbf{s}_j, \ \text{where}\ \left(\lfloor\frac{i-1}{K}\rfloor\times K\right) < j\leq \left(\left(\lfloor\frac{i-1}{K}\rfloor+1\right)\times K\right)}w_{\mathbf{g}_{i},\mathbf{s}_j}\ \ \text{subject\ to $j$-th slice is not projected before}.
    \end{split}
\end{equation}
Here we construct a bijection between the input domain and the slice domain. Thus, $\Omega\cong\Omega_{\mathrm{s}}$.
\end{proof}

Next, we will prove Theorem \ref{theorem:understanding}, which provides a theoretical understanding of our design in Physics-Attention. The formalization of Physics-Attention can be directly derived from integral based on proper assumptions.

\begin{proof}
According to Lemma \ref{lemma:projection}, we can obtain an isomorphic projection $\boldsymbol{g}$ between a countable input domain $\Omega$ and slice domain~$\Omega_{\mathrm{s}}$. Suppose that the slice weight $w_{\ast,\ast}:\overline\Omega\times\overline\Omega_{\mathrm{s}}\to\mathbb{R}$ is smooth in both $\overline\Omega$ and $\overline\Omega_{\mathrm{s}}$, where the $\overline\Omega$ and $\overline\Omega_{\mathrm{s}}$ denote the continuation of $\Omega$ and $\Omega_{\mathrm{s}}$ respectively, we can obtain $\boldsymbol{g}$ as a diffeomorphism projection.

Then, we define the value function $\boldsymbol{u}_{\mathrm{s}}$ on the physics-aware token domain $\Omega_{\mathrm{s}}$ as follows:
    \begin{equation}
	\begin{split}\label{equ:slice_function}
\boldsymbol{u}_\mathrm{s}(\boldsymbol{\xi}_{\mathrm{s}})=\left(\int_{\Omega}w_{\boldsymbol{\xi},\boldsymbol{\xi}_{\mathrm{s}}}\boldsymbol{u}(\boldsymbol{\xi})\mathrm{d}\boldsymbol{\xi}\right)/\left(\int_{\Omega}w_{\boldsymbol{\xi},\boldsymbol{\xi}_{\mathrm{s}}}\mathrm{d}\boldsymbol{\xi}\right),
    \end{split}
\end{equation}
which corresponds to the slice token definition in Eq.~\eqref{equ:token_encoding}.

Based on the above assumptions and definitions, we have:
\begin{equation}
	\begin{split}\label{equ:overall}
 & \mathcal{G}(\boldsymbol{u})(\mathbf{g}) = \int_{\Omega}\kappa(\mathbf{g}, \boldsymbol{\xi})\boldsymbol{u}(\boldsymbol{\xi})\mathrm{d}\boldsymbol{\xi} \\
 & = \int_{\Omega_{\text{s}}} \kappa_{\text{ms}}\big(\mathbf{g}, \boldsymbol{\xi}_{\text{s}}\big)\boldsymbol{u}_{\mathrm{s}}\left(\boldsymbol{\xi}_{\text{s}}\right)\mathrm{d}\boldsymbol{g}^{-1}(\boldsymbol{\xi}_{\text{s}})\qquad\qquad\qquad\qquad\qquad\qquad\qquad\ \ \text{($\kappa_{\mathrm{ms}}(\cdot,\cdot):\Omega\times\Omega_{\mathrm{s}}\to\mathbb{R}^{C\times C}$ is a kernel function)} \\
 & = \int_{\Omega_{\text{s}}} \kappa_{\text{ms}}\big(\mathbf{g}, \boldsymbol{\xi}_{\text{s}}\big)\boldsymbol{u}_{\mathrm{s}}\left(\boldsymbol{\xi}_{\text{s}}\right)|\det(\nabla_{\boldsymbol{\xi}_{\text{s}}}\boldsymbol{g}^{-1}(\boldsymbol{\xi}_{\text{s}}))|\mathrm{d}\boldsymbol{\xi}_{\text{s}} \\
  & = \int_{\Omega_{\text{s}}} \left(\frac{\int_{\Omega_{\text{s}}}w_{\mathbf{g}, \boldsymbol{\xi}_{\text{s}}^\prime}\kappa_{\text{ss}}\big(\boldsymbol{\xi}_{\text{s}}^\prime, \boldsymbol{\xi}_{\text{s}}\big)\mathrm{d}\boldsymbol{\xi}_{\text{s}}^\prime}{\int_{\Omega_{\text{s}}}w_{\mathbf{g}, \boldsymbol{\xi}_{\text{s}}^\prime}\mathrm{d}\boldsymbol{\xi}_{\text{s}}^\prime}\right)\boldsymbol{u}_{\mathrm{s}}\left(\boldsymbol{\xi}_{\text{s}}\right)|\det(\nabla_{\boldsymbol{\xi}_{\text{s}}}\boldsymbol{g}^{-1}(\boldsymbol{\xi}_{\text{s}}))|\mathrm{d}\boldsymbol{\xi}_{\text{s}} \qquad\ \text{($\kappa_{\mathrm{ms}}$ is a linear combination of $\kappa_{\mathrm{ss}}$ with weights $w_{\ast,\ast}$)} \\
  & = \int_{\Omega_{\text{s}}} \underbrace{w_{\mathbf{g}, \boldsymbol{\xi}_{\text{s}}^\prime}}_{\text{DeSlice}}\int_{\Omega_{\text{s}}}\underbrace{\kappa_{\text{ss}}\big(\boldsymbol{\xi}_{\text{s}}^\prime, \boldsymbol{\xi}_{\text{s}}\big)}_{\text{Attention among slice tokens}}\ \ \underbrace{\boldsymbol{u}_{\mathrm{s}}\left(\boldsymbol{\xi}_{\text{s}}\right)}_{\text{Slice token}}|\det(\nabla_{\boldsymbol{\xi}_{\text{s}}}\boldsymbol{g}^{-1}(\boldsymbol{\xi}_{\text{s}}))|\mathrm{d}\boldsymbol{\xi}_{\text{s}}\mathrm{d}\boldsymbol{\xi}_{\text{s}}^\prime \qquad\qquad\ \ \ \ \ \text{(Suppose that $\int_{\Omega_{\text{s}}}w_{\mathbf{g}, \boldsymbol{\xi}_{\text{s}}^\prime}\mathrm{d}\boldsymbol{\xi}_{\text{s}}^\prime=1$)} \\
  & \approx \underbrace{\sum_{j=1}^{M}\mathbf{w}_{i,j}}_{\text{Eq.~\eqref{equ:deslice}}}\underbrace{\sum_{t=1}^{M}\frac{\exp\left(\left(\mathbf{W}_{\mathbf{q}}\boldsymbol{u}_{\mathrm{s}}(\boldsymbol{\xi}_{\mathrm{s},j})\right)\left(\mathbf{W}_{\mathbf{k}}\boldsymbol{u}_{\mathrm{s}}(\boldsymbol{\xi}_{\mathrm{s},t})\right)^{\sf T}/\tau\right)}{\sum_{p=1}^{M}\exp\left(\left(\mathbf{W}_{\mathbf{q}}\boldsymbol{u}_{\mathrm{s}}(\boldsymbol{\xi}_{\mathrm{s},j})\right)\left(\mathbf{W}_{\mathbf{k}}\boldsymbol{u}_{\mathrm{s}}(\boldsymbol{\xi}_{\mathrm{s},p})\right)^{\sf T}/\tau\right)}\mathbf{W}_{\mathbf{v}}}_{\text{Eq.~\eqref{equ:attn}}}\Bigg(\underbrace{\frac{\sum_{p=1}^{N} \mathbf{w}_{p,t}\boldsymbol{u}(\mathbf{g}_{p})}{\sum_{p=1}^{N} \mathbf{w}_{p,t}}}_{\text{Eq.~\eqref{equ:token_encoding}}}\Bigg) \qquad\qquad\ \ \ \text{(Lemma \ref{lemma:integral})} \\
  & = \sum_{j=1}^{M}\mathbf{w}_{i,j}\sum_{t=1}^{M}\frac{\exp(\mathbf{q}_{j}\mathbf{k}_{t}^{\sf T}/\tau)}{\sum_{p=1}^{M}\exp(\mathbf{q}_{j}\mathbf{k}_{p}^{\sf T}/\tau)}\mathbf{v}_{t},
  \end{split}
\end{equation}

where $\kappa_{\mathrm{ms}}$ defines the kernel function between mesh points and slices, and $\kappa_{\mathrm{ss}}$ is defined among slices. Since slices are permutation-invariant in our implementation, we take $|\det(\nabla_{\boldsymbol{\xi}_{\text{s}}}\boldsymbol{g}^{-1}(\boldsymbol{\xi}_{\text{s}}))|=1$ for simplification. Different from the attention among mesh points, the usage of Lemma \ref{lemma:integral} here is based on the Monte-Carlo approximation in the slice domain.
\end{proof}


\section{Implementation Details} \label{appendix:detail}

In this section, we provide the details of our experiments, including \textbf{benchmarks}, \textbf{metrics}, and \textbf{implementations}.

\begin{table}[b]
\vspace{-15pt}
	\caption{Summary of experiment benchmarks, where the first six datasets are from FNO \cite{li2021fourier} and geo-FNO \cite{Li2022FourierNO}, Shape-Net Car is from \cite{umetani2018learning} and preprocessed by \cite{anonymous2023geometryguided}, and AirfRANS is from \cite{bonnet2022airfrans}. \#Mesh records the size of discretized meshes. \#Dataset is organized as the number of samples in training and test sets.}
	\label{tab:dataset_detail}
	\vskip 0.1in
	\centering
	\begin{small}
		\begin{sc}
			\renewcommand{\multirowsetup}{\centering}
			\setlength{\tabcolsep}{5.5pt}
			\scalebox{1}{
			\begin{tabular}{l|c|c|c|c|c|c}
				\toprule
			    Geometry & Benchmarks & \#Dim & \#Mesh & \#Input & \#Output & \#Dataset \\
			    \midrule
                 Point Cloud & Elasticity & 2D & 972 & Structure & Inner Stress & (1000, 200) \\
                 \midrule
			     Structured  & Plasticity & 2D+Time & 3,131 & External Force & Mesh Displacement & (900, 80) \\
        	Mesh & Airfoil & 2D & 11,271 & Structure & Mach Number & (1000, 200) \\
                  & Pipe & 2D & 16,641 & Structure & Fluid Velocity & (1000, 200) \\
                  \midrule
                  \multirow{2}{*}{Regular Grid}
		          & Navier–Stokes & 2D+Time & 4,096 & Past Velocity & Future Velocity & (1000, 200) \\	     
                    & Darcy & 2D & 7,225 & Porous Medium & Fluid Pressure & (1000, 200) \\
                \midrule
                Unstructured & Shape-Net Car & 3D & 32,186 & Structure & Velocity \& Pressure & (789, 100) \\
                Mesh & AirfRANS & 2D & 32,000 & Structure & Velocity \& Pressure & (800, 200) \\
				\bottomrule
			\end{tabular}}
		\end{sc}
	\end{small}
\end{table}

\vspace{-5pt}
\subsection{Benchmarks}
We extensively evaluate our model in eight benchmarks, whose information is summarized in Table \ref{tab:dataset_detail}. Note that these benchmarks involve the following three types of PDEs:
\begin{itemize}
    \item \textbf{Solid material} \cite{dym1973solid}: Elasticity and Plasticity.
    \item \textbf{Navier-Stokes equations for fluid} \cite{mclean2012continuum}: Airfoil, Pipe, Navier-Stokes, Shape-Net Car and AirfRANS.
    \item \textbf{Darcy’s law} \cite{hubbert1956darcy}: Darcy.
\end{itemize}
Here are the details of each benchmark.

\vspace{-5pt}
\paragraph{Elasticity} This benchmark is to estimate the inner stress of the elasticity material based on the material structure, which is discretized in 972 points \cite{Li2022FourierNO}. For each case, the input is a tensor in the shape of $972\times 2$, which contains the 2D position of each discretized point. The output is the stress of each point, thus in the shape of $972\times 1$. As for the experiment, 1000 samples with different structures are generated for training and another 200 samples are used for test.

\vspace{-5pt}
\paragraph{Plasticity} This benchmark is to predict the future deformation of the plasticity material under the impact from above by an arbitrary-shaped die \cite{Li2022FourierNO}. For each case, the input is the shape of the die, which is discretized into the structured mesh and recorded as a tensor with shape $101\times 31$. The output is the deformation of each mesh point in the future 20 time steps, that is a tensor in the shape of $20\times 101\times 31 \times 4$, which contains the deformation in four directions. Experimentally, 900 samples with different die shapes are used for model training and 80 new samples are for test.

\vspace{-5pt}
\paragraph{Airfoil} This task is to estimate the Mach number based on the airfoil shape, where the input shape is discretized into structured mesh with shape $221\times 51$ and the output is the Mach number for each mesh point \cite{Li2022FourierNO}. Here, all the shapes are deformed from the NACA-0012 case provided by the National Advisory Committee for Aeronautics. 1000 samples in different airfoil designs are used for training and the other 200 samples are for testing.

\vspace{-5pt}
\paragraph{Pipe} This benchmark is to estimate the horizontal fluid velocity based on the pipe structure \cite{Li2022FourierNO}. Each case discretizes the pipe into structured mesh with size $129\times 129$. Thus, for each case, the input tensor is in the shape of $129\times 129\times 2$, which contains the position of each discretized mesh point. The output is the velocity value for each point, thus in the shape of $129\times 129\times 1$. 1000 samples with different pipe shapes are used for model training and 200 new samples are for test, which are generated by controlling the centerline of the pipe.

\vspace{-5pt}
\paragraph{Navier-Stokes} This benchmark is to model the incompressible and viscous flow on a unit torus, where the fluid density is constant and viscosity is set as $10^{-5}$ \cite{li2021fourier}. The fluid field is discretized into $64\times 64$ regular grid. The task is to predict the fluid in the next 10 steps based on the observations in the past 10 steps. 1000 fluids with different initial conditions are generated for training, and 200 new samples are used for test.

\vspace{-5pt}
\paragraph{Darcy} This benchmark is to model the flow of fluid through a porous medium \cite{li2021fourier}. Experimentally, the process is discretized into a $421\times 421$ regular grid. Then we downsample the data into $85\times 85$ resolution for main experiments. The input of the model is the structure of the porous medium and the output is the fluid pressure for each grid. 1000 samples are used for training and 200 samples are generated for test, where different cases contain different medium structures.

\vspace{-5pt}
\paragraph{Shape-Net Car} This benchmark focuses on the drag coefficient estimation for the driving car, which is essential for car design. Overall, 889 samples with different car shapes are generated to simulate the 72 km/h speed driving situation \cite{umetani2018learning}, where the car shapes are from the ``car'' category of ShapeNet \cite{chang2015shapenet}. Concretely, they discretize the whole space into unstructured mesh with 32,186 mesh points and record the air around the car and the pressure over the surface. Here we follow the experiment setting in 3D-GeoCA \cite{anonymous2023geometryguided}, which takes 789 samples for training and the other 100 samples for testing. The input mesh of each sample is also preprocessed into the combination of mesh point position, signed distance function and normal vector. The model is trained to predict the velocity and pressure value for each point. Afterward, we can calculate the drag coefficient based on these estimated physics fields.

\vspace{-5pt}
\paragraph{AirfRANS} This dataset contains the high-fidelity simulation data for Reynolds-Averaged Navier–Stokes equations \cite{bonnet2022airfrans}, which is also used to assist airfoil design. Different from Airfoil \cite{Li2022FourierNO}, this benchmark involves more diverse airfoil shapes under finer discretized meshes. Specifically, it adopts airfoils in the 4 and 5 digits series of the National Advisory Committee for Aeronautics, which have been widely used historically. Each case is discretized into 32,000 mesh points. By changing the airfoil shape, Reynolds number, and angle of attack, AirfRANS provides 1000 samples, where 800 samples are used for training and 200 for the test set. Air velocity, pressure and viscosity are recorded for surrounding space and pressure is recorded for the surface. Note that both drag and lift coefficients can be calculated based on these physics quantities. However, as their original paper stated, air velocity is hard to estimate for airplanes, making all the deep models fail in drag coefficient estimation \cite{bonnet2022airfrans}. Thus, in the main text, we focus on the lift coefficient estimation and the pressure quantity on the volume and surface, which is essential to the take-off and landing stages of airplanes.

\subsection{Metrics}

Since our experiment consists of standard benchmarks and practical design tasks, we also include several design-oriented metrics in addition to the relative L2 for physics fields.

\vspace{-5pt}
\paragraph{Relative L2 for physics fields} Given the physics field $\mathbf{u}$ and the model predicted field $\widehat{\mathbf{u}}$, the relative L2 of model prediction can be calculated as follows:
\begin{equation}
	\begin{split}
\operatorname{Relative\ L2}=\frac{\|\mathbf{u}-\widehat{\mathbf{u}}\|}{\|\mathbf{u}\|}.
    \end{split}
\end{equation}
\vspace{-5pt}
\paragraph{Relative L2 for drag and lift coefficients} For Shape-Net Car and AirfRANS, we also calculated the drag and lift coefficients based on the estimated physics fields. For unit density fluid, the coefficient (drag or lift) is defined as follows:
\begin{equation}
	\begin{split}
C=\frac{2}{v^2A}\left(\int_{\partial\Omega}p(\boldsymbol{\xi})\left(\widehat{n}(\boldsymbol{\xi})\cdot\widehat{i}(\boldsymbol{\xi})\right)\mathrm{d}\boldsymbol{\xi} +\int_{\partial\Omega}\tau(\boldsymbol{\xi})\cdot\widehat{i}(\boldsymbol{\xi})\mathrm{d}\boldsymbol{\xi}\right),
    \end{split}
\end{equation}
where $v$ is the speed of the inlet flow, $A$ is the reference area, $\partial\Omega$ is the object surface, $p$ denotes the pressure function, $\widehat{n}$ means the outward unit normal vector of the surface, $\widehat{i}$ is the direction of the inlet flow and $\tau$ denotes wall shear stress on the surface. $\tau$ can be calculated from the air velocity near the surface \cite{mccormick1994aerodynamics}, which is usually much smaller than the pressure item. Specifically, for the drag coefficient of Shape-Net Car, $\widehat{i}$ is set as $(-1,0,0)$ and $A$ is the area of the smallest rectangle enclosing the front of cars. As for the lift coefficient of AirfRANS, $\widehat{i}$ is set as $(0,0,-1)$. The relative L2 is defined between the ground truth coefficient and the coefficient calculated from the predicted velocity and pressure field.

\vspace{-5pt}
\paragraph{Spearman’s rank correlations for drag and lift coefficients} Given $K$ samples in the test set with the ground truth coefficients $C=\{C^{1},\cdots, C^{K}\}$ (drag or lift) and the model predicted coefficients $\widehat{C}=\{\widehat{C}^{1},\cdots,\widehat{C}^{K}\}$, the Spearman correlation coefficient is defined as the Pearson correlation coefficient between the rank variables, that is:
\begin{equation}
	\begin{split}
\rho=\frac{\operatorname{cov}\left(R(C)R(\widehat{C})\right)}{\sigma_{R(C)}\sigma_{R(\widehat{C})}},
    \end{split}
\end{equation}
where $R$ is the ranking function, $\operatorname{cov}$ denotes the covariance and $\sigma$ represents the standard deviation of the rank variables. Thus, this metric is highly correlated to the model guide for design optimization. A higher correlation value indicates that it is easier to find the best design following the model-predicted coefficients \cite{spearman1961proof}.

\subsection{Implementations}

As shown in Table \ref{tab:training_model_detail}, all the baselines are trained and tested under the same training strategy. As for Transolver, the number of channels $C$ is set as 256 for high-dimensional inputs: Navier-Stokes, Shape-Net Car and AirfRANS, and 128 for the others. Further to balance efficiency, we set the number of slices $M$ as 32 for $C=256$ configuration and 64 for $C=128$. These configurations can also align our model parameters and running efficiency with other Transformer operators. Especially, we configure $\operatorname{Project}()$ in Eq.~\eqref{equ:learn_slice} as a single Linear layer for unstructured meshes: Elasticity, ShapeNet Car and AirfRANS, a convolution layer with $3\times 3$ kernel for others. Next, we will present the implementation of all the baselines. 

\vspace{-5pt}
\paragraph{Typical neural operators} All of these baselines have been widely examined in previous papers. Thus, for FNO and geo-FNO, we report their results following their official papers \cite{li2021fourier,Li2022FourierNO}. As for the other baselines, we follow the results in LSM \cite{wu2023LSM}. Note that all the other baselines expect geo-FNO cannot handle the unstructured mesh. Thus, their performances in Elasticity are evaluated by employing the special transformation in geo-FNO at the model beginning and ending layer. However, we find that geo-FNO degenerates seriously in practical design tasks (Shape-Net Car and AirfRANS) even with comprehensively searched hyperparameters, that is number of layers in $\{2,4,6,8\}$, number of channels in $\{128, 256, 512\}$ and number of Fourier basis in $\{16,32,64\}$. This may come from that geo-FNO is based on the periodic boundary assumption, while these two unstructured-mesh benchmarks apparently present complex boundaries. Since other neural operators are based on geo-FNO for unstructured meshes, we do not test them in practical design tasks.

\vspace{-5pt}
\paragraph{Transformer-based models} GNOT \cite{hao2023gnot} reports the performance of itself and OFormer \cite{li2023transformer}, Galerkin Transformer \cite{Cao2021ChooseAT} in part of six standard benchmarks, which also adopts a comprehensive hyperparameter search for themselves and these two baselines. Thus, we report their results based on their official paper and results from GNOT. As for the other untested benchmarks, we search the hyperparameters as follows: number of layers in $\{2,3,4,5,6,7,8\}$, number of channels in $\{128,256,512\}$, number to heads in $\{1,2,4,6,8\}$. As for HT-Net \cite{anonymous2023htnet} and FactFormer \cite{li2023scalable}, since they employ the square window and axial decomposition respectively, they are inapplicable to Elasticity. Thus, we evaluate it under the aforementioned hyperparameter search on the other baselines. As for the ONO \cite{anonymous2023improved} that is in submission, we adopted their official code provided in the OpenReview and rerun it under the same training and hyperparameter-search strategy as other baselines and also tried different linear attention designs \cite{Katharopoulos2020TransformersAR,kitaev2020reformer,Xiong2021NystrmformerAN,Cao2021ChooseAT,performer}.

Experimentally, we found that ONO \cite{anonymous2023improved} and OFormer \cite{li2023transformer} come across the unstable training problem on Shape-Net Car and AirfRANS. This may come from that ONO adopts the Cholesky decomposition to the channel attention map to ensure feature orthogonality, which requires the channel attention to be positive semidefinite. However, in large-scale mesh scenarios, this assumption may not be satisfied, making the model cannot be successfully executed. As for OFormer, it utilizes the cross-attention mechanism with target mesh points as queries and learned features as keys and values. However, directly calculating the attention among massive mesh points (over 32,000) is hard to optimize, causing the training loss curve to keep jittering. Thus, we do not report their performance in practical design tasks.

\vspace{-5pt}
\paragraph{Geometric deep models} We implement simple MLP, GraphSAGE \cite{hamilton2017inductive}, PointNet \cite{qi2017pointnet} and Graph U-Net \cite{gao2019graph} following the code base of AirfRANS \cite{bonnet2022airfrans}. As for GNO \cite{li2020neural} and 3D-GeoCA \cite{anonymous2023geometryguided}, we adopt the official code base of 3D-GeoCA. And we implement GINO based on its official code. Note that in the official paper, GINO is only trained to estimate the surface pressure of cars, which is not enough to calculate the drag coefficient. To ensure a comprehensive evaluation, we follow the experiment setting of 3D-GeoCA, which is to predict the surface pressure and surrounding air velocity simultaneously.

\begin{table}[t]
\vspace{-5pt}
	\caption{Training and model configurations of Transolver. Training configurations are directly from previous works without extra tuning \cite{bonnet2022airfrans,hao2023gnot,anonymous2023geometryguided}. Here $\mathcal{L}_{\mathrm{v}}$ and $\mathcal{L}_{\mathrm{s}}$ represent the loss on volume and surface fields respectively. As for Darcy, we adopt an additional spatial gradient regularization term $\mathcal{L}_{\mathrm{g}}$ following ONO \cite{anonymous2023improved}.}
	\label{tab:training_model_detail}
	\vskip 0.1in
	\centering
	\begin{small}
		\begin{sc}
			\renewcommand{\multirowsetup}{\centering}
			\setlength{\tabcolsep}{2.2pt}
			\begin{tabular}{l|ccccc|cccc}
				\toprule
                \multirow{3}{*}{Benchmarks} & \multicolumn{5}{c}{Training Configuration (Shared in all baselines)} & \multicolumn{4}{c}{Model Configuration} \\
                    \cmidrule(lr){2-6}\cmidrule(lr){7-10}
			  & Loss & Epochs & Initial LR & Optimizer & Batch Size & Layers $L$ & Heads & Channels $C$ &  Slices $M$ \\
			    \midrule
                 Elasticity & & \multirow{6}{*}{500} & \multirow{6}{*}{$10^{-3}$} & & 1 & \multirow{6}{*}{8} & \multirow{6}{*}{8} & 128 & 64 \\
			Plasticity & & & &  & 8 & & & 128 & 64 \\
        	Airfoil & Relative & & & AdamW & 4 & & & 128 & 64 \\
                Pipe & L2 & & & \citeyearpar{loshchilov2018decoupled} & 4 & & & 128 & 64 \\
                Navier–Stokes & & & & & 2 & & & 256 & 32 \\
                Darcy & $\mathcal{L}_{\mathrm{rL2}}+0.1\mathcal{L}_{\mathrm{g}}$ & & & & 4 &  & & 128 & 64 \\
                \midrule
                Shape-Net Car & $\mathcal{L}_{\mathrm{v}}+0.5\mathcal{L}_{\mathrm{s}}$ & 200 & \multirow{2}{*}{$10^{-3}$} & Adam & 1 & \multirow{2}{*}{8} & \multirow{2}{*}{8} & 256 & 32 \\
                AirfRANS & $\mathcal{L}_{\mathrm{v}}+\mathcal{L}_{\mathrm{s}}$  & 400 & & \citeyearpar{DBLP:journals/corr/KingmaB14} & 1 & & & 256 & 32  \\
				\bottomrule
			\end{tabular}
		\end{sc}
	\end{small}
\vspace{-5pt}
\end{table}

\section{Full Ablations}\label{appdix:ablations}

We include the complete ablations here as a supplement to Table \ref{tab:ablation} in the main text.
\vspace{-5pt}
\paragraph{Number of slices $M$} As demonstrated in Table \ref{tab:ablation_full}, increasing the number of slices $M$ will generally boost the model performance. Also, it is notable that increasing $M$ will also bring extra computation costs. {Too many slice tokens may also {lead} the attention calculation {to} potential noise or distraction, which could bring performance fluctuation or drop, such as in $M=1024$ cases of Elasticity and Navier-Stokes.} In this paper, we set $M$ as 64 for the models with 128 hidden channels and 32 for models with 256 hidden channels, which can accomplish a better balance between performance and efficiency.
\vspace{-5pt}
\paragraph{Learnable slices or regular squares} In all benchmarks, using fixed regular squares will damage the model performance, even in the Navier-Stokes and Darcy which are originally discretized in the regular grid. This may result from the intricate and spatially deformed physics states in PDEs, further demonstrating the advantage of learning adaptive slices in Transolver.

\paragraph{{Comparison with full attention}} {To highlight the advantages of Transolver, we also compare it with Plain Transformer~\cite{NIPS2017_3f5ee243}, which employs the full attention mechanism. Note that it is non-trivial to apply the plain Transformer to our benchmarks (1k-32k points). Even for the most powerful Large Language Model GPT-4 \cite{achiam2023gpt}, it can only handle up to 32k tokens, which is well-optimized and accelerated. When it comes to PDE-solving tasks, almost all of the previous Transformer-based models \cite{li2023transformer,hao2023gnot} utilize well-designed efficient attention for more than 1k mesh points instead of plain Transformer. Thus, the experiments of plain Transformer in over 1k tokens are with the help of the gradient-checkpointing technique \cite{chen2016training}, which can reduce the GPU memory but also significantly affect speed. Even so, we can only measure plain Transformer up to 7k tokens on one A100 40GB GPU.}

{Concretely, we downsample the Darcy dataset from 168,921 tokens ($411\times 411$) to 484 tokens ($22\times 22$) and replace Physics-Attention {by} canonical attention with identical architecture {elsewhere}. Note that directly downsampling the ground-truth data is unreasonable in the PDE context, since some physical interactions can only be observed in high resolution. That is why both models' performances drop seriously in 484 and 1,681 token settings. However, the relative promotions under different domain sizes are still referable. In Figure \ref{fig:scalability} of main text, we have demonstrated that Transolver performs stable in the range of 7,225 to 168,921 tokens. In the new experiments of Table \ref{tab:ablation_plain_trm}, we can find that in the smallest resolution, Transolver performs similarly to plain Transformer, while the advantages of Transolver are getting more significant {under} larger mesh points, demonstrating the superiority of Physics-Attention design in handling large-scale meshes.}

\begin{table}[t]
	\caption{Full ablations on Physics-Attention. We experiment on three variants: changing the number of slices $M$ and replacing the learnable slices with fixed regular squares in the shape of $4\times 4$ (\#Regular Squares). Efficiency is calculated on 1024 mesh points and batch size as 1. Since \#Regular Squares cannot handle unstructured inputs, we omit its efficiency and performance on Elasticity.}
	\label{tab:ablation_full}
	\vskip 0.1in
	\centering
	\begin{small}
		\begin{sc}
			\renewcommand{\multirowsetup}{\centering}
			\setlength{\tabcolsep}{4.5pt}
			\scalebox{1}{
			\begin{tabular}{l|c|cc|cccccc}
				\toprule
			\multicolumn{2}{c|}{\multirow{2}{*}{Ablations}} & {\#Memory} & {\#Time} & \multicolumn{6}{c}{{Relative L2}} \\
                \multicolumn{2}{c|}{} & {(GB)} & {(s/epoch)} & {Elasticity} & {Plasticity} & {Airfoil} & {Pipe} & {Naiver-Stokes} & {Darcy} \\
			    \midrule
                  & {1} & 0.60 & 37.76 & 0.0148 & 0.0140 & 0.0084 & 0.0087 & 0.1511 & 0.0386 \\	     		
                  & {8} & 0.60 & 37.82 & 0.0071 & 0.0028 & 0.0056 & 0.0040 & 0.1136 & 0.0096 \\	     
                  & {16} & 0.61 & 37.96 & 0.0067 & 0.0019 & 0.0057 & 0.0045 & 0.0958 & 0.0067 \\
                  & 32 & 0.62 & 38.00 & 0.0067 & 0.0015 & 0.0067 & 0.0042 & 0.0900 & 0.0063  \\
                  Number & 64 & 0.64 & 38.18 & 0.0064 & 0.0012 & 0.0053 & 0.0033 & 0.0871  & 0.0059 \\
                  of Slices & 96 & 0.68 & 38.31 & 0.0061 & \textbf{0.0008} & 0.0054 & 0.0033 & 0.0802 & 0.0055 \\
                  & 128 & 0.69 & 42.24 & 0.0058 & 0.0009 & 0.0049 & 0.0034 & \textbf{0.0783} & 0.0054 \\
                  & {256} & 0.81 & 39.13 & \textbf{0.0054} & 0.0013 & \textbf{0.0043} & \textbf{0.0032} & 0.0856 & \textbf{0.0050} \\
                  & {512} & 1.01 & 39.75 & 0.0059 & 0.0012 & 0.0045 & 0.0040 & 0.0914 & 0.0056 \\
                  & {1024} & 1.53 & 40.49 & 0.0068 & 0.0017 & 0.0048 & 0.0047 & 0.1003 & 0.0055 \\
                  \midrule
                  \multicolumn{2}{l|}{{Regular Squares}} & / & / & / & 0.0022 & 0.0071 & 0.0049 & 0.1077 & 0.0088   \\
				\bottomrule
			\end{tabular}}
		\end{sc}
	\end{small}
 \vspace{-10pt}
\end{table}

\begin{table}[t]
	\caption{{Comparison between Plain Transformer \cite{NIPS2017_3f5ee243} and Transolver at different resolutions of Darcy.}}
	\label{tab:ablation_plain_trm}
	\vskip 0.1in
	\centering
	\begin{small}
		\begin{sc}
			\renewcommand{\multirowsetup}{\centering}
			\setlength{\tabcolsep}{4pt}
			\scalebox{1}{
			\begin{tabular}{c|cccccccc}
				\toprule
			Number of Mesh Points & 484 & 1,681 & 3,364 & 7,225 & 10,609 & 19,881 & 44,521 & 168,921 \\
                (Resolutions) & (22$\times$22) & (41$\times$41) & (58$\times$58) & (85$\times$85) & (103$\times$103) & (141$\times$141) & (211$\times$211) & (411$\times$411) \\
			    \midrule
       Plain Transformer & 0.02017 & 0.0103 & 0.0073 & 0.0081 & OOM & OOM & OOM & OOM \\
       Transolver & 0.02019 & 0.0089 & 0.0058 & 0.0059 & 0.0057 & 0.0062 & 0.0063 & 0.0060 \\
       \midrule
       Relative Promotion & -0.1\% & 13.6\% & 20.5\% & 27.2\% & /& /& /& /\\
				\bottomrule
			\end{tabular}}
		\end{sc}
	\end{small}
 \vspace{-10pt}
\end{table}

\section{Addition Visualizations}\label{appdix:full_vis}

In this section, we provide more visualizations for learned slices and showcases as a supplement to Figure \ref{fig:visual} and Figure \ref{fig:case}.

\subsection{Learned Slices}\label{appdix:full_vis_slice}

\paragraph{Original mesh} We visualize the learned slices on 5 benchmarks: Shape-Net Car (Figure \ref{fig:slice_car}), Airfoil (Figure \ref{fig:slice_airfoil}), Pipe (Figure \ref{fig:slice_pipe}), Naiver-Stokes (Figure \ref{fig:slice_ns}), and Darcy (Figure \ref{fig:slice_darcy}). These visualizations provide valuable insights into the model's ability to capture diverse patterns and perceive subtle properties of physical states, including front-back-side pressure of driving cars, complex flow characteristics around the airfoil, fluid dynamics in pipes, swirling patterns of complex fluid interactions, and fluid-structure interactions along the porous medium. Especially, the model also learns a periodic diagonal pattern in the Naiver-Stokes equation, which corresponds to the periodic external force in the Navier-Stokes benchmark \cite{li2021fourier}. These findings demonstrate model's ability to learn underlying physical states behind complicated geometrics.

\vspace{-5pt}
\paragraph{Resampled mesh} To demonstrate that our design in learning physical states is free from concrete discretization, we also apply Transolver in resampled meshes in Figures \ref{fig:slice_airfoil_resample}-\ref{fig:slice_darcy_resample}, where we only keep 50\%-80\% mesh points of the original input. Note that this design may break the continuous and elaborately designed structure of the original mesh. Surprisingly, we can find that even for these broken meshes, Transolver can still capture physical states precisely, further verifying its geometry-general capability and highlighting the benefits of learning physical states.

\begin{figure*}[!htbp]
\begin{center}
\centerline{\includegraphics[width=\textwidth]{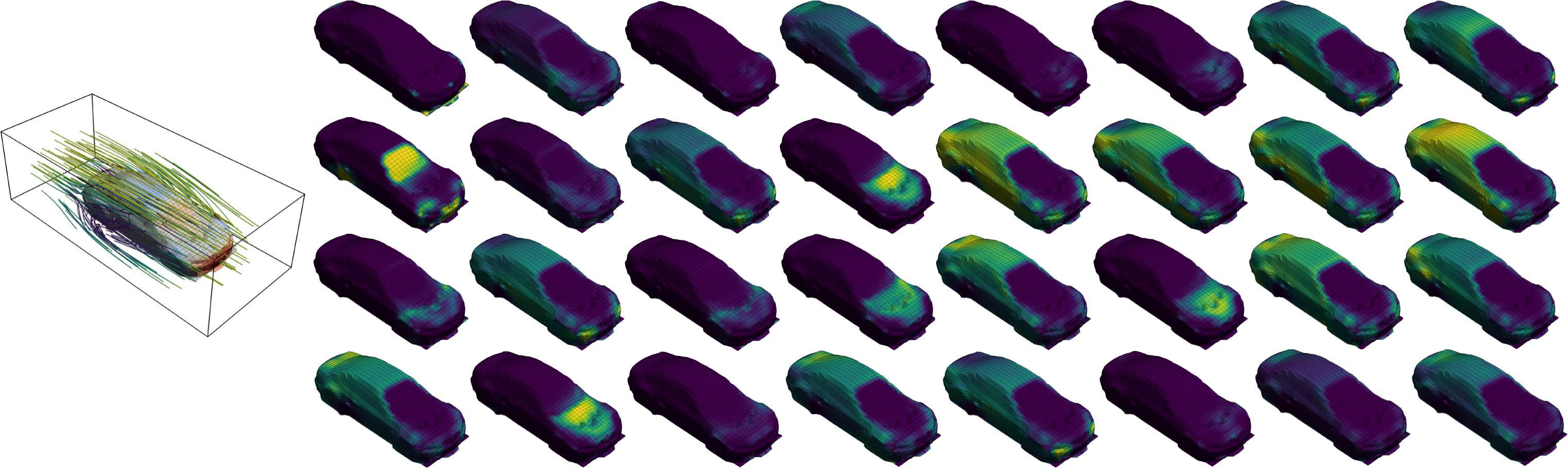}}
    \vspace{-5pt}
	\caption{Visualization of learned slices on the Shape-Net Car benchmark (number of slices $M=32$). Note that this benchmark involves both volume and mesh geometrics and velocity-pressure joint modeling. For clarity, we only present the slice weights on the surface here. Thus, the sum of visualized weights could be smaller than 1.}
	\label{fig:slice_car}
\end{center}
\vspace{-10pt}
\end{figure*}

\begin{figure*}[!htbp]
\begin{center}
\centerline{\includegraphics[width=\textwidth]{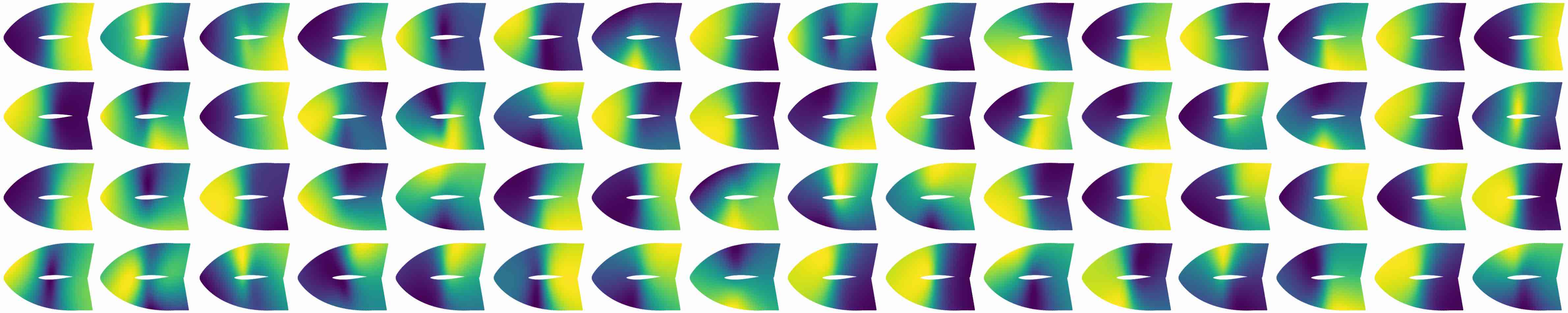}}
    \vspace{-5pt}
	\caption{Visualization of learned slices on the original Airfoil benchmark (number of slices $M=64$).}
	\label{fig:slice_airfoil}
\end{center}
\vspace{-10pt}
\end{figure*}

\begin{figure*}[!htbp]
\begin{center}
\centerline{\includegraphics[width=\textwidth]{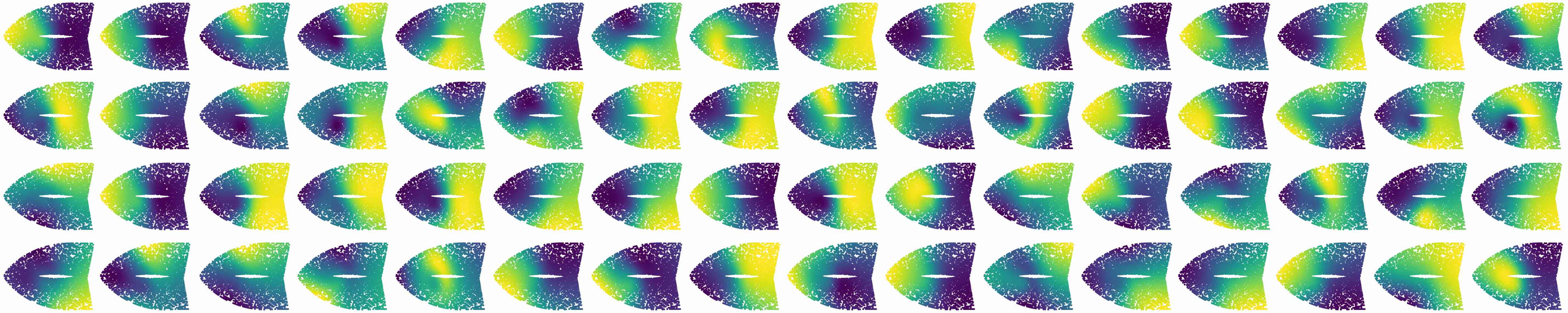}}
    \vspace{-5pt}
	\caption{Visualization of learned slices on the randomly resampled Airfoil benchmark (number of slices $M=64$).}
	\label{fig:slice_airfoil_resample}
\end{center}
\vspace{-10pt}
\end{figure*}

\begin{figure*}[!htbp]
\begin{center}
\centerline{\includegraphics[width=\textwidth]{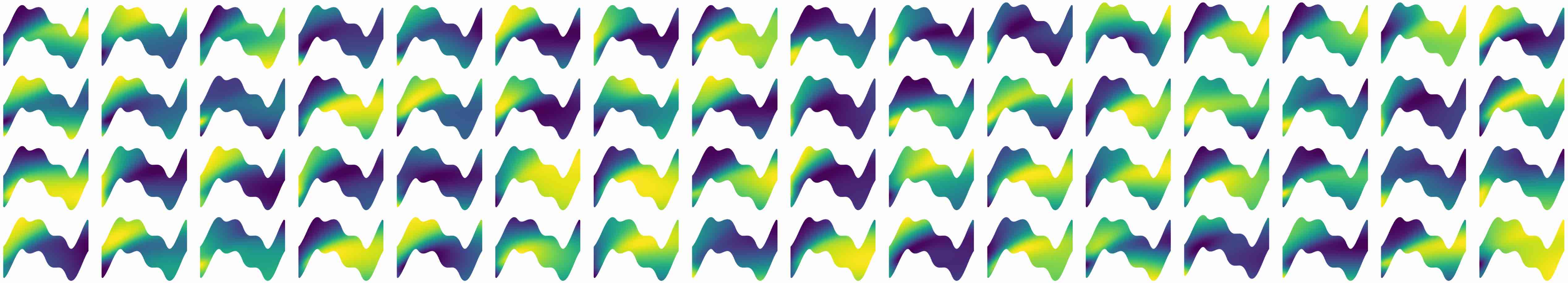}}
    \vspace{-5pt}
	\caption{Visualization of learned slices on the original Pipe benchmark (number of slices $M=64$).}
	\label{fig:slice_pipe}
\end{center}
\vspace{-10pt}
\end{figure*}

\begin{figure*}[!htbp]
\begin{center}
\centerline{\includegraphics[width=\textwidth]{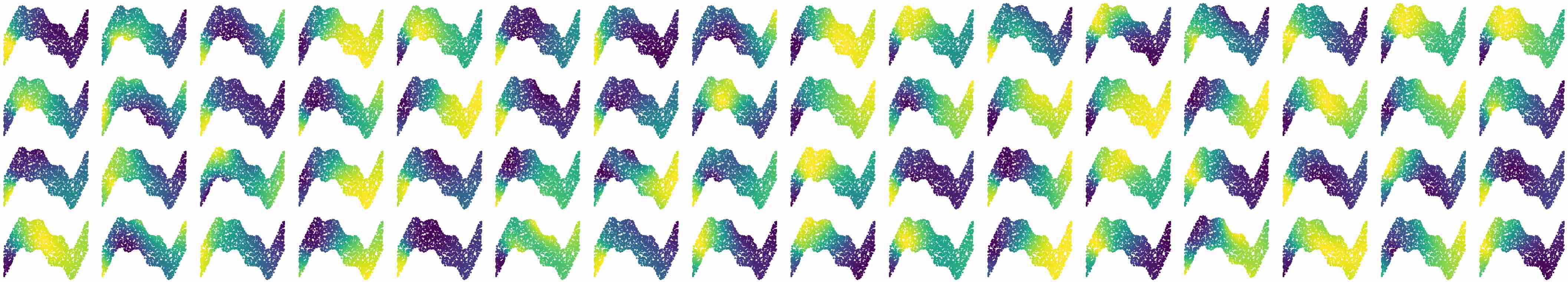}}
    \vspace{-5pt}
	\caption{Visualization of learned slices on the randomly resampled Pipe benchmark (number of slices $M=64$).}
	\label{fig:slice_pipe_resample}
\end{center}
\vspace{-10pt}
\end{figure*}

\begin{figure*}[!htbp]
\begin{center}
\centerline{\includegraphics[width=\textwidth]{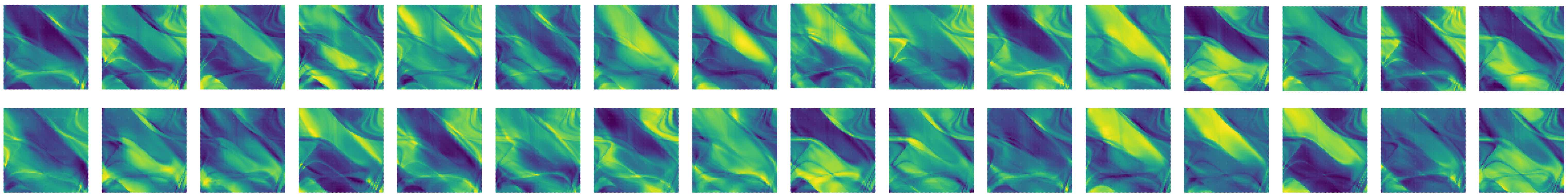}}
    \vspace{-5pt}
	\caption{Visualization of learned slices on the original Navier-Stokes benchmark (number of slices $M=32$).}
	\label{fig:slice_ns}
\end{center}
\vspace{-10pt}
\end{figure*}

\begin{figure*}[!htbp]
\begin{center}
\centerline{\includegraphics[width=\textwidth]{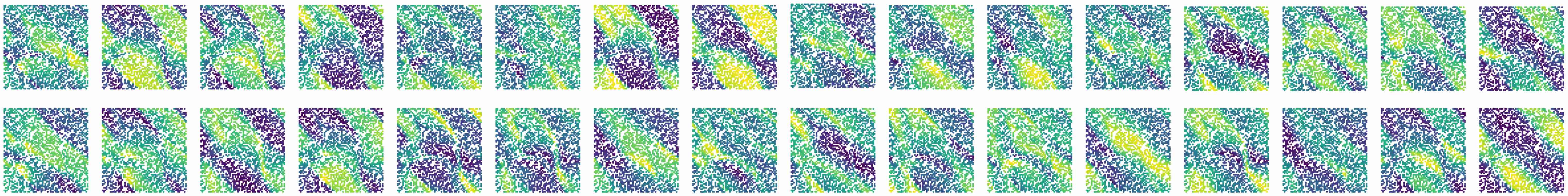}}
    \vspace{-5pt}
	\caption{Visualization of learned slices on the randomly resampled Navier-Stokes benchmark (number of slices $M=32$).}
	\label{fig:slice_ns_resample}
\end{center}
\vspace{-10pt}
\end{figure*}

\begin{figure*}[!htbp]
\begin{center}
\centerline{\includegraphics[width=\textwidth]{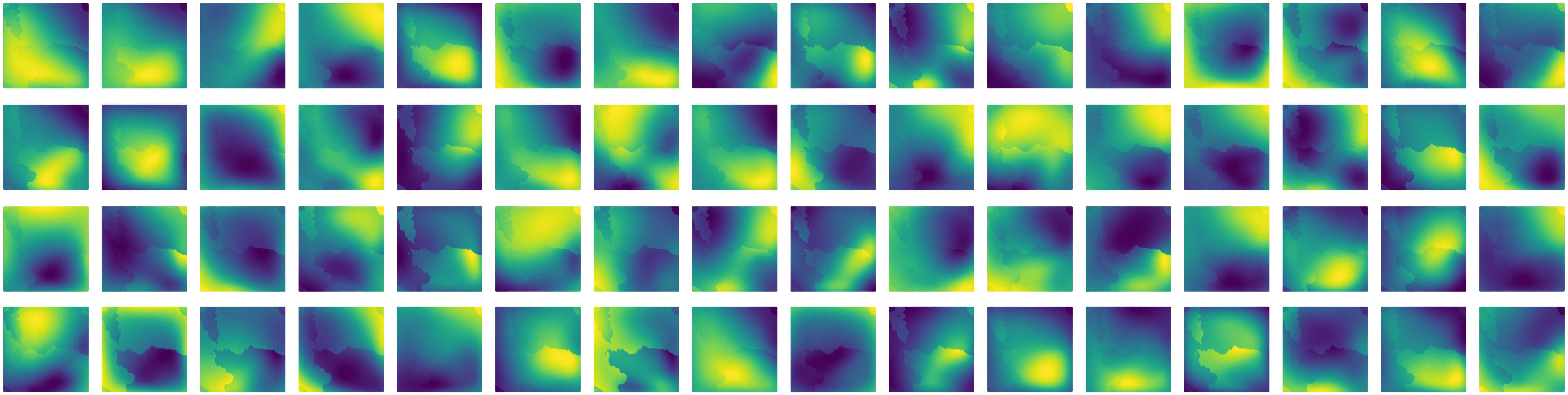}}
    \vspace{-5pt}
	\caption{Visualization of learned slices on the original Darcy benchmark (number of slices $M=64$).}
	\label{fig:slice_darcy}
\end{center}
\vspace{-10pt}
\end{figure*}

\begin{figure*}[!htbp]
\begin{center}
\centerline{\includegraphics[width=\textwidth]{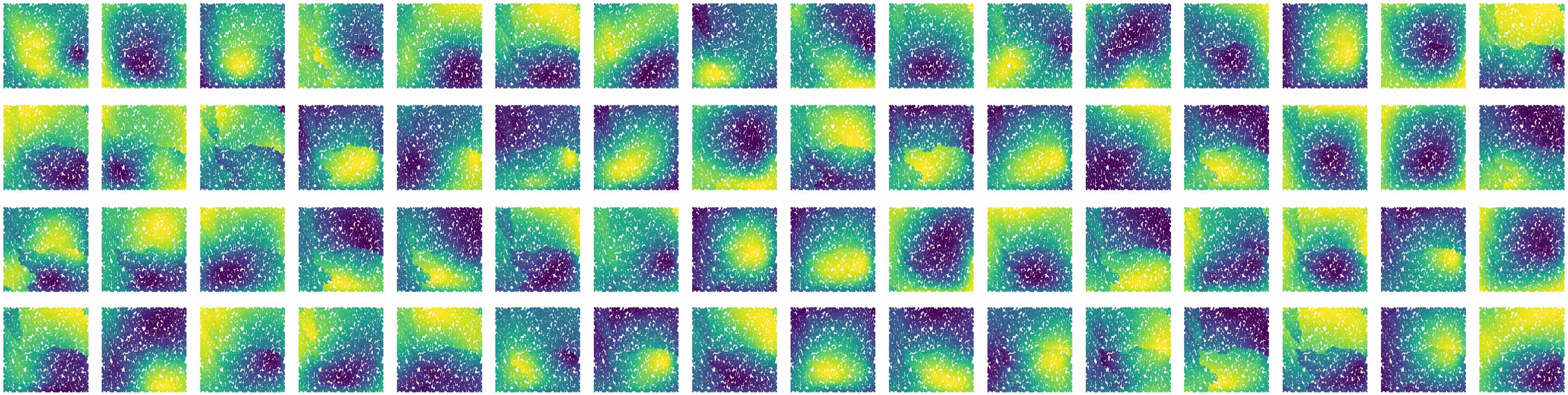}}
    \vspace{-5pt}
	\caption{Visualization of learned slices on the randomly resampled Darcy benchmark (number of slices $M=64$).}
	\label{fig:slice_darcy_resample}
\end{center}
\vspace{-15pt}
\end{figure*}

\subsection{Showcases}\label{appdix:full_vis_showcase}
As shown in Figure \ref{fig:showcase_all}, in comparison with the previous state-of-the-art models: LSM \cite{wu2023LSM} and GNOT \cite{hao2023gnot}, Transolver excels in capturing the deformation of Plasticity, the shock wave of Airfoil, fluid in the Pipe end and the swirling parts of Navier-Stokes. Here, GNOT fails in predicting the future deformation of Plasticity. 

\begin{figure*}[!htbp]
\begin{center}
\centerline{\includegraphics[width=\textwidth]{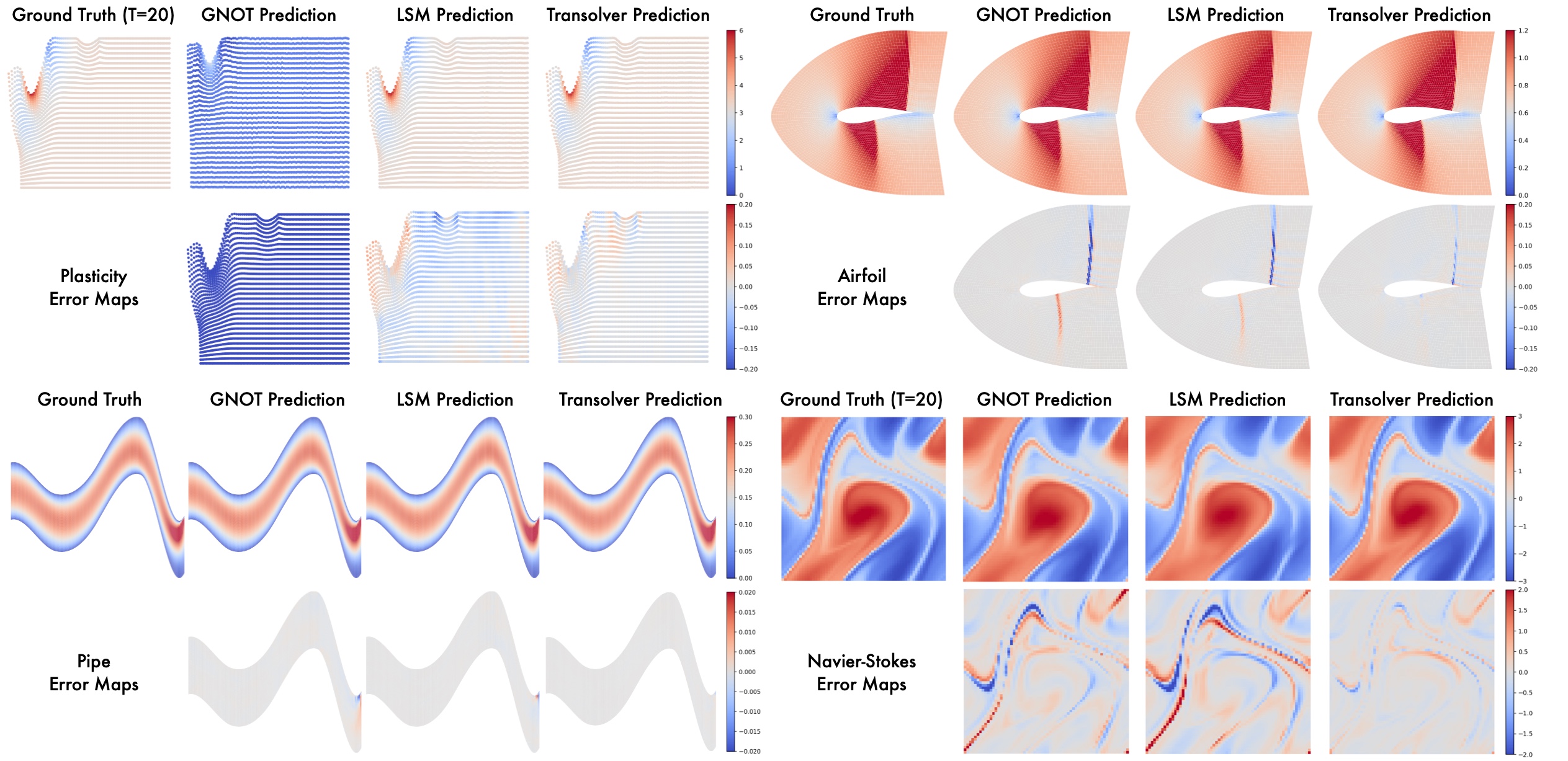}}
    \vspace{-5pt}
	\caption{Showcase comparison with the previous best models: LSM and GNOT. Both prediction results and error maps are provided.}
	\label{fig:showcase_all}
\end{center}
\vspace{-15pt}
\end{figure*}

\section{Addition Experiments}
Here we provide more experiments to complete the results of the main text and investigate new experiment settings.

\subsection{Model Scalability}\label{appdix:scale}
In the main text, we have investigated the model scalability in Darcy regarding resolution, data and parameters. Here we provide the scalability experiments on more benchmarks. Since we only have the data generation code of the Darcy benchmark, we only test the parameter scalability for the other datasets. As shown in Figure \ref{fig:scale_all}, most of the benchmarks will benefit from a larger model size, especially for Elasticity (Relative L2: 0.0064 for 8 layers, 0.0047 for 40 layers) and Airfoil (Relative L2: 0.0053 for 8 layers and 0.0037 for 40 layers). These results highlight the scaling potential of Transolver. However, for Plasticity (Relative L2: 0.0012) and Pipe (Relative L2: 0.0033) whose performances are already close to saturation, increasing the model parameter will bring performance fluctuation. The slight performance drop may also come from the stochasticity of optimization. Thus, a large dataset is expected to fully unlock the power of deep models in solving PDEs. Since we only focus on the model design in this paper, we would like to leave the data collection as future work.

\begin{figure*}[t]
\begin{center}
\centerline{\includegraphics[width=\textwidth]{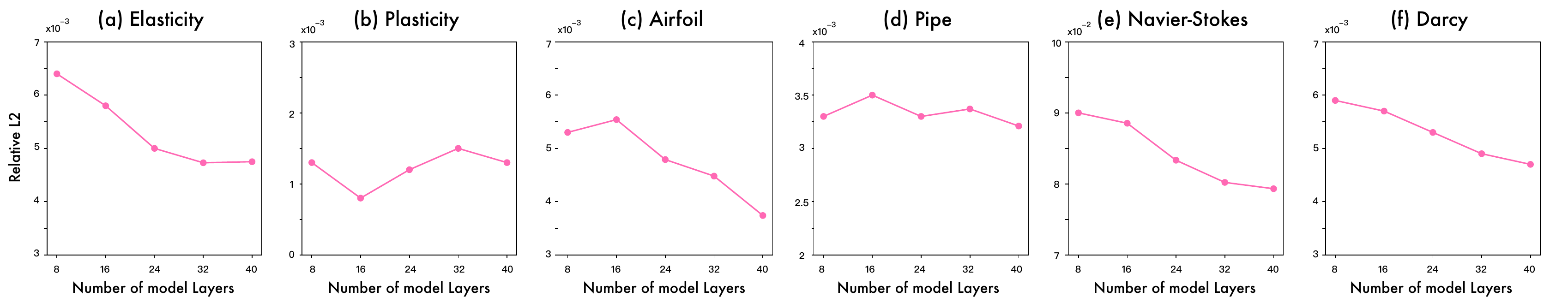}}
    \vspace{-5pt}
	\caption{Parameter scalability on six standard benchmarks, where we gradually increase the number of model layers from 8 to 40.}
	\label{fig:scale_all}
\end{center}
\vspace{-20pt}
\end{figure*}

\subsection{Adaptive Multiscale Modeling}\label{appdix:adaptive_multiscale}

Since $N$ mesh points are ascribed to $M$ slices as shown in Eq.~\eqref{equ:learn_slice}, it is easy to calculate that the expectation of the number of mesh points per slice is $\frac{N}{M}$. Thus, we can control the granularity of physical state modeling by adjusting $M$, where a larger $M$ will derive more slices, leading to more fine-grained physical states. Further, in deep models, we can configure $M$ as different values in different layers to conveniently achieve multiscale modeling.

As shown in Figure \ref{fig:adaptive_multiscale}, we further try the configuration that different layers are set with different numbers of slices. Since the calculation complexity of multiscale configuration is between $M=64$ and $M=32$, we also list the performance of these two official configurations for clarity in Table \ref{tab:analysis_multiscale}. Note that as we presented in Table \ref{tab:ablation_full}, in general, increasing the number of slices $M$ will boost the model performance. But, surprisingly, we find that in some cases, the multiscale configuration can perform comparably or even better than the $M=64$ official configuration. This may come from that both Darcy and Airfoil exhibit clear multiscale properties, thereby benefiting more from the hierarchical features.

\begin{table}[h]
\vspace{-5pt}
	\caption{Comparison between official and multiscale configuration. Efficiency is
calculated on 1024 mesh points and batch size as 1.}
	\label{tab:analysis_multiscale}
	\vskip 0.1in
	\centering
	\begin{small}
		\begin{sc}
			\renewcommand{\multirowsetup}{\centering}
			\setlength{\tabcolsep}{3.5pt}
			\scalebox{1}{
			\begin{tabular}{l|cc|cccccc}
				\toprule
			\multirow{2}{*}{Designs} & {\#Mem} & {\#Time} & \multicolumn{6}{c}{{Relative L2}} \\
                 & {(GB)} & {(s/epoch)} & {Elasticity} & {Plasticity} & {Airfoil} & {Pipe} & {Naiver-Stokes} & {Darcy} \\
			    \midrule
			     Official Config $M=32$ & 0.62 & 38.00 & 0.0067 & 0.0015 & 0.0067 & 0.0042 & 0.0900 & 0.0063  \\
                  Official Config $M=64$ & 0.64 & 38.18 & \textbf{0.0064} & \textbf{0.0012} & 0.0053 & \textbf{0.0033} & \textbf{0.0871}  & 0.0059 \\
                  \midrule
                  Multiscale Config & 0.62 & 38.10 & 0.0066 & \textbf{0.0012} & \textbf{0.0050} & 0.0037 & 0.0891 & \textbf{0.0056} \\
                  \bottomrule
			\end{tabular}}
		\end{sc}
	\end{small}
\end{table}

Different from the well-acknowledged multiscale deep models, such as Swin Transformer \cite{liu2021Swin} or U-Net~\cite{ronneberger2015u}, our design in learning slices is not limited by the discretization. This means that we can set the number of slices $M$ at will, regardless of the exact division restriction. This also makes our model free from inflexible padding operations. In this paper, we mainly experiment with the official configuration. We would like to leave the exploration of the multiscale architecture of Transolver as a future work.

\begin{figure*}[b]
\vspace{-10pt}
\begin{center}
\centerline{\includegraphics[width=0.6\textwidth]{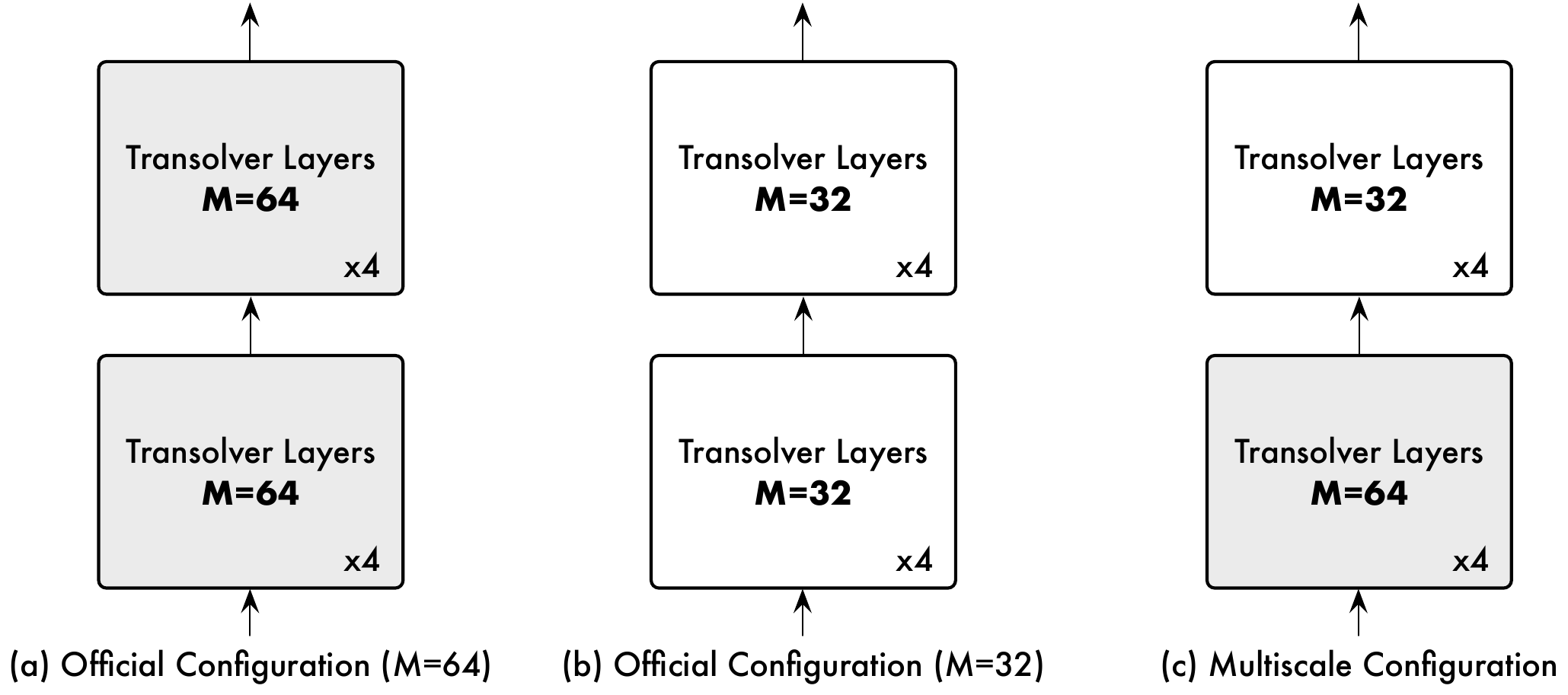}}
    \vspace{-5pt}
	\caption{Illustration of adaptive multiscale modeling in Transolver. In addition to the official configuration that all Transolver layers share the same number of slices, we also provide the multiscale configuration where the last 4 layers only use half slices w.r.t~first 4 layers.}
	\label{fig:adaptive_multiscale}
\end{center}
\vspace{-15pt}
\end{figure*}


\subsection{{OOD Generalization}}\label{appdix:ood}

{In Table \ref{tab:ood}, we have provided experiments of out-of-distribution Reynolds and angles of attacks. As a supplement, we include the detailed settings in Table \ref{tab:ood_settings}, where the training and test sets contain completely different Reynolds or angles of attacks.}

{We also include the complete results in Table \ref{tab:mainres_design_ood_full}. It is impressive that Transolver can still achieve the consistent best performance in the out-of-distribution settings. Specifically, the relative promotion of Transolver is more significant than the i.i.d.~setting, further demonstrating Transolver's generalizability advantage. This may come from our special design in learning physical states, which enables Transolver to capture more foundational physics information.}

\begin{table}[t]
    \vspace{-5pt}
	\caption{{Settings of OOD generalization experiments on the AirfRANS. The range of Reynolds and angles are listed.}}
	\label{tab:ood_settings}
	\vskip 0.1in
	\centering
	\begin{small}
		\begin{sc}
			\renewcommand{\multirowsetup}{\centering}
			\setlength{\tabcolsep}{10pt}
			\scalebox{1}{
			\begin{tabular}{l|cc|cc}
				\toprule
		\multirow{3}{*}{Dataset} & \multicolumn{2}{c|}{OOD Reynolds} & \multicolumn{2}{c}{OOD Angles}\\
   \cmidrule(lr){2-3}\cmidrule(lr){4-5}
   & Reynolds Range & Samples & Angles Range & Samples \\
			    \midrule
                    Training set & $[3\times 10^6, 5\times 10^6]$ & 500 & $[-2.5^\circ, 12.5^\circ]$  & 800 \\
                    Test set & $[2\times 10^6, 3\times 10^6]\cup [5\times 10^6, 6\times 10^6]$ & 500 & $[-5^\circ, -2.5^\circ]\cup[12.5^\circ, 15^\circ]$ & 200 \\
				\bottomrule
			\end{tabular}}
		\end{sc}
	\end{small}
	\vspace{-5pt}
\end{table}

\begin{table*}[h]
\vspace{-10pt}
	\caption{Performance comparison on out-of-distribution tasks of AirfRANS. Relative L2 of the surrounding (Volume) and surface (Surf) physics fields, the relative L2 of lift coefficient ($C_{L}$) is also recorded, along with Spearman’s rank correlations $\rho_{L}$.}
	\label{tab:mainres_design_ood_full}
	\vspace{-5pt}
	\vskip 0.15in
	\centering
	\begin{small}
		\begin{sc}
			\renewcommand{\multirowsetup}{\centering}
			\setlength{\tabcolsep}{4.5pt}
			\begin{tabular}{l|cccccccccc}
				\toprule
                    \multirow{3}{*}{Model} & \multicolumn{4}{c}{OOD Reynolds} & \multicolumn{4}{c}{OOD Angles} \\
                    \cmidrule(lr){2-5}\cmidrule(lr){6-9}
				& Volume $\downarrow$  & Surf $\downarrow$ & $C_{D}$ $\downarrow$ & $\rho_{D}$ $\uparrow$ & Volume $\downarrow$ & Surf $\downarrow$ & $C_{L}$ $\downarrow$ & $\rho_{L}$ $\uparrow$  \\
                    \midrule
                    Simple MLP & 0.0669 & 0.1153 & 0.6205 & 0.9578 & 0.1309 & 0.3311 & 0.4128 & 0.9572 \\
                    GraphSAGE \citep{hamilton2017inductive} & 0.0798 & 0.1254 & 0.4333 & 0.9707 & 0.1192 & \underline{0.2359} & \underline{0.2538} & 0.9894 \\
                    PointNet \citep{qi2017pointnet} & 0.0838 & 0.1403 & 0.3836 & 0.9806 & 0.2021 & 0.4649 & 0.4425 & 0.9784 \\
                    Graph U-Net \citep{gao2019graph} & 0.0538 & 0.1168 & 0.4664 & 0.9645 & 0.0979 & 0.2391 & 0.3756 & 0.9816 \\
                    MeshGraphNet \citep{pfaff2021learning} & \textcolor{gray}{0.2789} & \textcolor{gray}{0.2382} & \textcolor{gray}{1.7718} & \textcolor{gray}{0.7631} & \textcolor{gray}{0.4902} & \textcolor{gray}{1.1071} & \textcolor{gray}{0.6525} & \textcolor{gray}{0.8927}\\
                    \midrule
                    GNO \citep{li2020neural} & 0.0833 & 0.1562 & 0.4408 & \underline{0.9878} & 0.1626 & \underline{0.2359} & 0.3038 & 0.9884  \\
                    Galerkin \citep{Cao2021ChooseAT} & 0.0330 & 0.0972 & 0.4615 & 0.9826 & 0.0577 & 0.2773 & 0.3814 & 0.9821\\
                    GNOT \citep{hao2023gnot} & \underline{0.0305} & \underline{0.0959} & \underline{0.3268} & 0.9865 & \underline{0.0471} & 0.3466 & 0.3497 & 0.9868 \\
                    GINO \citep{li2023geometryinformed} & 0.0839 & 0.1825 & 0.4180 & 0.9645 & 0.1589 & 0.2469 & 0.2583 & \underline{0.9923} \\
                    \midrule
                    \textbf{Transolver (Ours)} & \textbf{0.0143} & \textbf{0.0364} & \textbf{0.2996} & \textbf{0.9896} & \textbf{0.0357} & \textbf{0.2275} & \textbf{0.1500} & \textbf{0.9950} \\
				\bottomrule
			\end{tabular}
		\end{sc}
	\end{small}
    \vspace{-5pt}
\end{table*}

\subsection{{Apply to Lagrangian Settings}}
{In the main text, we follow the convention of previous neural operators \cite{li2021fourier,wu2023LSM} and experiment with Eulerian datasets, where the geometry of input data is fixed. There is another branch of tasks, named Lagrangian settings, which simulates the dynamics system (e.g.~fluid) by tracking a series of particles. To further verify the effectiveness of Transolver in handling ever-changing geometrics, we also experiment with a Lagrangian PDE-solving task.}

{As for Lagrangian settings, the convention is to construct a graph at each timestep and utilize GNN-based models to capture local interactions among particles \cite{sanchez2020learning}. Although the capability to process irregular meshes also enables Transolver to receive the scattered particles as inputs, the essential graph information is missing. Thus, we did a preliminary experiment on the WaterDrop process \cite{sanchez2020learning}, whose task is to predict the future 994 steps based on the past 6 steps for 2,000 particles. Concretely, we enhance the GNN message passing with an additional Transolver layer in parallel. Both models are trained with 1M iterations, which requires 2 days in one A100 40GB GPU.}

{As shown in Table \ref{tab:lagrange}, Transolver can further boost the GNN performance {by a sharp margin} and generate a more accurate future, especially for the water splash process. This result verifies the benefits of Transolver in enhancing physics learning.}

\begin{table*}[h]
\vspace{-5pt}
    \caption{Performance comparison on the Lagrangian WaterDrop dataset. Position MSE of 2,000 predicted particles is recorded.}\label{tab:lagrange}
    \vspace{-5pt}
    \setlength{\tabcolsep}{3pt}
    \begin{minipage}[!b]{0.45\textwidth}
    \begin{table}[H]
    \begin{center}
    \begin{threeparttable}
    \begin{small}
    \begin{sc}
    \begin{tabular}{l|ccc}
    \toprule
    Model & MSE $\downarrow$  \\
    \midrule
    GNN \citep{sanchez2020learning} & 0.0182 \\
    \textbf{GNN + Transolver (Ours)} & \textbf{0.0069} \\
    \midrule
    Relative Promotion & 62.1\%  \\ 
    \bottomrule
    \end{tabular}
    \end{sc}
    \end{small}
    \end{threeparttable}
    \end{center}
    \end{table}
    \end{minipage}
    \hfill
    \begin{minipage}[!b]{0.53\textwidth}
      \begin{center}
    \vspace{10pt}
    \hspace{-5pt}
    \includegraphics[width=0.99\textwidth]{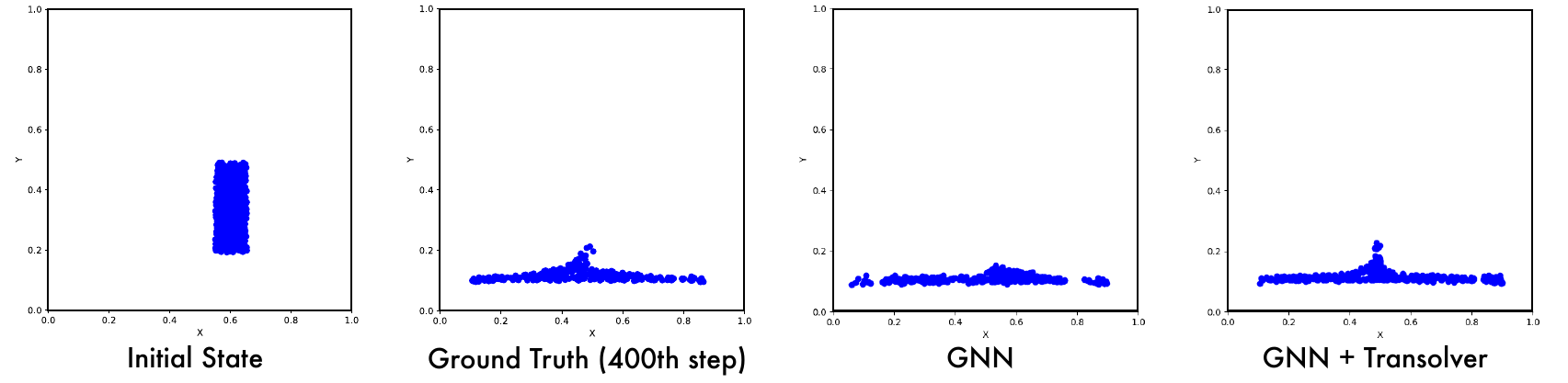}
  \end{center}
    \end{minipage}
    \label{nsresults}
    \vspace{-10pt}
\end{table*}

\subsection{Standard Deviations}
We repeat all the experiments three times and provide standard deviations here. As shown in Table \ref{tab:mainres_standard_std}, Transolver surpasses the previous state-of-the-art models with high confidence. It is worth noticing that we compare Transolver with the second-best model on each benchmark. It is hard to outperform all the previous models consistently given that the previous models have shown ups and downs on different benchmarks, further verifying the effectiveness of our model. 

Especially for AirfRANS \cite{bonnet2022airfrans} that contains diverse conditions on airfoil shape, Reynolds number and angle of attack, according to their official paper, they ``choose to only run 1000 simulations as one of the goals of this dataset is to be close to real-world settings, i.e. limited quantity of data.'' Thus, this limited data setting will result in a relatively large deviation. Notably, even in this hard setting, Transolver still surpasses the second-best model with 95\% confidence.

\begin{table*}[h]
\vspace{-5pt}
	\caption{Standard deviations of Transolver on all experiments. For clarity, we also list the performance of the second-best model. Especially, for Shape-Net Car and AirfRANS, standard deviations on Spearman's rank correlations of drag or lift coefficients are provided.}
	\label{tab:mainres_standard_std}
	\vspace{-5pt}
	\vskip 0.15in
	\centering
	\begin{small}
		\begin{sc}
			\renewcommand{\multirowsetup}{\centering}
			\setlength{\tabcolsep}{1.2pt}
			\begin{tabular}{l|cccccccc}
				\toprule
                    \multirow{3}{*}{Model ($\times 10^{-2}$)} & \multicolumn{1}{c}{Point Cloud} & \multicolumn{3}{c}{Structured Mesh} & \multicolumn{2}{c}{Regular Grid} & \multicolumn{2}{c}{Unstructured Mesh} \\
                    \cmidrule(lr){2-2}\cmidrule(lr){3-5}\cmidrule(lr){6-7}\cmidrule(lr){8-9}
				& Elasticity & Plasticity & Airfoil & Pipe & \scalebox{0.9}{Navier–Stokes} & Darcy & \scalebox{0.9}{Shape-Net Car} & AirfRANS \\
				\midrule
                    \multirow{2}{*}{Second-best Model} & 0.86\scalebox{0.7}{$\pm 0.02$} & 0.17\scalebox{0.7}{$\pm 0.01$} & 0.59\scalebox{0.7}{$\pm 0.01$} & 0.47\scalebox{0.7}{$\pm 0.02$} & 11.95\scalebox{0.7}{$\pm 0.20$} & 0.65\scalebox{0.7}{$\pm 0.01$} & 98.42\scalebox{0.7}{$\pm 0.12$} & 99.64\scalebox{0.7}{$\pm 0.07$} \\
                    & \scalebox{0.8}{(GNOT)} & \scalebox{0.8}{(OFormer)} & \scalebox{0.8}{(LSM)} & \scalebox{0.8}{(GNOT)} & \scalebox{0.8}{(ONO)} & \scalebox{0.8}{(LSM)} & \scalebox{0.8}{(3D-GeoCA)} & \scalebox{0.8}{(GraphSAGE)} \\
                    \midrule
                    Transolver & \textbf{0.64}\scalebox{0.7}{$\pm 0.02$} & \textbf{0.12}\scalebox{0.7}{$\pm 0.01$} & \textbf{0.53}\scalebox{0.7}{$\pm 0.01$} & \textbf{0.33}\scalebox{0.7}{$\pm 0.02$} & \textbf{9.00}\scalebox{0.7}{$\pm 0.13$} & \textbf{0.57}\scalebox{0.7}{$\pm 0.01$} & \textbf{99.35}\scalebox{0.7}{$\pm 0.10$} & \textbf{99.78}\scalebox{0.7}{$\pm 0.04$} \\
                    Confidence Interval & 99\% & 99\% & 99\% & 99\% & 99\% & 99\% & 99\% & 95\% \\
				\bottomrule
			\end{tabular}
		\end{sc}
	\end{small}
\end{table*}

\section{Full Efficiency Analysis}\label{appdix:efficiency}
As a supplement to Figure \ref{fig:efficiency} in the main text, we also conduct experiments on different sizes of input meshes and record the model parameter, running time and the GPU memory of five Transformer-based methods, which are Transolver, ONO, GNOT, OFormer, Galerkin. From Figure \ref{fig:full_efficiency} and Table \ref{tab:all_efficiency}, we can find that in comparison with other methods, Transolver presents the least running time growth, which benefits from our design of linear-complexity Physics-Attention. Especially in large-scale meshes, Transolver is nearly 5x times faster than method ONO with $23\%$ the GPU memory usage.

\begin{table*}[t]
  \vspace{-5pt}
  \caption{Model efficiency comparison in Elasticity (Relative L2) and Shape-Net Car ($\rho_{D}$), where we select five Transformer-based methods that can be applied to unstructured meshes. Efficiency is evaluated on inputs of different meshes during training. Running time is measured by the time to complete one epoch, which contains $10^3$ iterations. ``/'' indicates the baseline will fail in this benchmark.}\label{tab:all_efficiency}
  \vskip 0.05in
  \centering
  \begin{threeparttable}
  \begin{small}
  \renewcommand{\multirowsetup}{\centering}
  \setlength{\tabcolsep}{7pt}
  \begin{tabular}{c|c|ccc|c|c}
    \toprule
    \multicolumn{2}{c}{Input Mesh Size ($N$)} & Parameter & GPU Memory & Running Time & Elasticity  & Shape-Net Car   \\
    \cmidrule{6-7}
    \multicolumn{1}{c}{Model} & \multicolumn{1}{c}{$N$} & (MB) & (GB) & (s~/~epoch) & (972 mesh points) & (32,186 mesh points) \\
    \toprule
     & 1024& 0.9284 & 0.64 & 38.183  & \multirow{5}{*}{0.0064} & \multirow{5}{*}{0.9935} \\
    & 2048& 0.9284 & 0.69 & 38.678& & \\
    \textbf{Transolver} & 4096 &0.9284 & 0.80 & 39.011 & &  \\
     \textbf{(Ours)}& 8192 & 0.9284 & 1.02 & 39.048 & &  \\
     & 16384 & 0.9284 & 1.51 & 54.104 & &  \\
     & 32768 & 0.9284 & 2.36 & 98.552 & &  \\
     \midrule
     & 1024 & 5.2477 & 0.85& 54.282 & \multirow{5}{*}{0.0086} & \multirow{5}{*}{0.9833}\\
     & 2048 & 5.2477 & 1.07& 55.939& &  \\
     GNOT & 4096 & 5.2477 & 1.47& 60.857& &  \\
     \citep{hao2023gnot}& 8192 & 5.2477 & 2.33& 67.170& &  \\
     & 16384 & 5.2477& 4.23& 112.552& &  \\
     & 32768 & 5.2477 & 7.46 & 209.923 & &  \\
    \midrule
     & 1024 & 1.1093 & 1.47 & 69.759 & \multirow{5}{*}{0.0118} & \multirow{5}{*}{/}  \\
     & 2048 & 1.1093 & 1.75& 76.245& &  \\
     ONO & 4096 & 1.1093 & 2.30& 100.134& &  \\
     \citep{anonymous2023improved} & 8192 & 1.1093 & 3.47& 149.598& &  \\
     & 16384 & 1.1093& 5.64& 255.339& &  \\
     & 32768 & 1.1093 & 10.09 & 462.459 & &  \\
    \midrule
     & 1024 & 0.8844 & 0.63& 28.147 & \multirow{5}{*}{0.0183} & \multirow{5}{*}{/}  \\
     & 2048 &0.8844 & 0.69& 30.983& &   \\
     OFormer & 4096 &0.8844 & 0.80& 31.113& &   \\
     \citep{li2023transformer}& 8192 &0.8844 & 1.02& 47.904 & &   \\
     & 16384 &0.8844 & 1.67& 91.671& &   \\
     & 32768 & 0.8844 & 2.44 & 182.205 & &  \\
    \midrule
     & 1024 & 1.0414 & 0.62& 26.507  & \multirow{5}{*}{0.0240} & \multirow{5}{*}{0.9764}   \\
     & 2048 &1.0414 & 0.66& 26.503& &  \\
     Galerkin Transformer & 4096 & 1.0414& 0.74& 27.481& &  \\
     \citep{Cao2021ChooseAT} & 8192 & 1.0414& 0.91& 37.098& &  \\
     & 16384 & 1.0414& 1.45& 67.524& &  \\
     & 32768 & 1.0414 & 2.05 & 129.872 & &  \\
    \bottomrule
    \end{tabular}
    \end{small}
    \end{threeparttable}
    \vspace{-5pt}
\end{table*}

\begin{figure*}[h]
\begin{center}
\centerline{\includegraphics[width=0.85\textwidth]{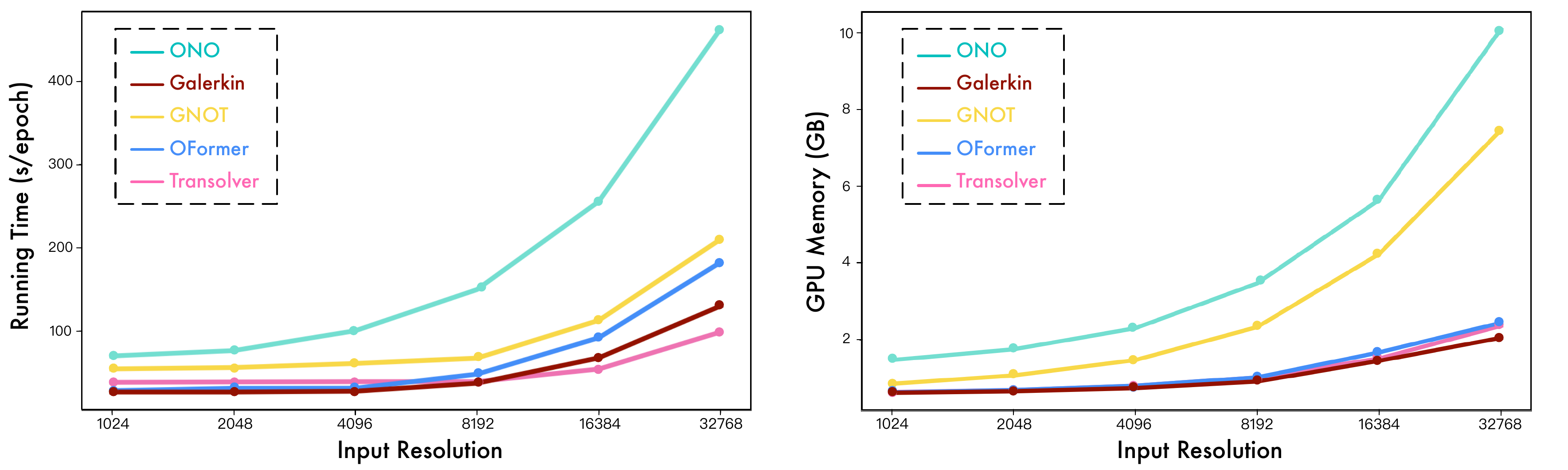}}
    \vspace{-10pt}
	\caption{The growth curves of Transformer-based models w.r.t.~the size of input mesh, where the batch size is set as $1$. We record the running time and the GPU memory under different mesh points $N$, which range from $2^{10}$ to $2^{15}$.}
	\label{fig:full_efficiency}
\end{center}
\vspace{-20pt}
\end{figure*}



\end{document}